\definecolor{GCD}{HTML}{4AA74F}
\definecolor{WC}{HTML}{287D8E}
\definecolor{SEM}{HTML}{3E4A89}
\definecolor{RES}{HTML}{440154}
\definecolor{SET}{RGB}{140,10,89} %
\renewcommand*{\mathellipsis}{%
  \mathinner{%
    \kern\ellipsisbeforegap%
    {\ldotp}\kern\ellipsisgap%
    {\ldotp}\kern\ellipsisgap%
    {\ldotp}\kern\ellipsisaftergap%
  }%
}
\renewcommand*{\dotsb@}{%
  \mathinner{%
    \kern\ellipsisbeforegap%
    {\cdotp}\kern\ellipsisgap%
    {\cdotp}\kern\ellipsisgap%
    {\cdotp}\kern\ellipsisaftergap%
  }%
}
\renewcommand*{\@cdots}{%
  \mathinner{%
    \kern\ellipsisbeforegap%
    {\cdotp}\kern\ellipsisgap%
    {\cdotp}\kern\ellipsisgap%
    {\cdotp}\kern\ellipsisaftergap%
  }%
}
\renewcommand*{\ellipsis@default}{%
  \ellipsis@before
  \kern\ellipsisbeforegap
  .\kern\ellipsisgap
  .\kern\ellipsisgap
  .\kern\ellipsisgap
  \ellipsis@after\relax}
\renewcommand*{\ellipsis@centered}{%
  \ellipsis@before
  \kern\ellipsisbeforegap
  .\kern\ellipsisgap
  .\kern\ellipsisgap
  .\kern\ellipsisaftergap
  \ellipsis@after\relax}
  \DeclareRobustCommand*{\dots}{%
    \ifmmode\@xp\mdots@\else\@xp\textellipsis\fi}}
\def\ellipsisgap{-.05em}
\def\ellipsisbeforegap{-.05em}
\def\ellipsisaftergap{.05em}
\DeclareMathOperator*{\E}{\mathbb{E}}
\newcommand{\defn}[1]{\textbf{#1}}
\newcommand{\Tuple}[1]{\left\langle #1 \right\rangle}
\newcommand{\Set}[1]{\left\{ #1 \right\}}
\newcommand{\indicator}[1]{\mathbbm{1}\!\left\{ #1 \right\}}
\newcommand{\xx}[0]{\ensuremath{\boldsymbol{x}}\xspace}
\newcommand{\yy}[0]{\ensuremath{\boldsymbol{y}}\xspace}
\newcommand{\Aeos}[0]{\mymacro{\A_\eos}}
\newcommand{\simIID}[0]{\overset{\mathsmaller{\mathsmaller{\mathrm{i.i.d.}}}}{\sim}}
\newcommand{\KL}{\mathrm{KL}}
\newcommand{\midpipe}{\mathrel\|}
\newcommand{\mymacro}[1]{#1}
\newcommand{\A}[0]{\mymacro{\mathcal{A}}}
\newcommand{\incomplete}[1][]{\mymacro{\A_{#1}^*}}
\newcommand{\complete}[0]{\mymacro{\incomplete\kern0.05em\eos}}
\newcommand{\omnicomplete}[0]{\mymacro{\incomplete \cup \complete}}
\newcommand{\gpoe}[0]{\mymacro{g}}
\newcommand{\lpoe}[1]{{\color{GCD}\ell_{#1}}}
\newcommand{\growing}[1]{\mymacro{\A^{(#1)}}}
\newcommand{\target}[1]{\mymacro{\widetilde{\gpoe}_{#1}}}
\newcommand{\localZ}[2]{{\color{GCD}L_{#1}}(#2)}
\newcommand{\lpoeFast}[0]{\lpoe{\EFF}}\newcommand{\localZFast}[1]{\localZ{\EFF}{#1}}
\newcommand{\defeq}{\overset{\mathsmaller{\mathsmaller{\mathrm{def}}}}{=}}
\newcommand{\ess}[0]{\widehat{N}}
\newcommand{\plm}[0]{\mymacro{\ensuremath{p}}}
\newcommand{\eos}[0]{\ensuremath{\texttt{EOS}}\xspace}
\newcommand{\pot}[0]{{\color{SEM}\phi}}
\newcommand{\Pots}[0]{{\color{SEM}\Phi}}
\newcommand{\EFF}[0]{\mathrm{eff}}
\newcommand{\EXP}[0]{\mathrm{exp}}
\newcommand{\pheff}[0]{{\color{GCD}\ensuremath{\Phi_{\EFF}}}}
\newcommand{\PotsFast}[0]{\pheff}
\newcommand{\PotsSlow}[0]{{\color{SEM}\ensuremath{\Phi_{\EXP}}}}
\newcommand{\w}[0]{{\color{WC}w}}
\newcommand{\EvalPots}[1]{{\color{SEM}\Pots({\normalcolor{#1}})}}
\newcommand{\EvalPotsSlow}[1]{\PotsSlow({\normalcolor{#1}})}
\newcommand{\ConditionalEvalPots}[2]{\frac{\EvalPots{#2 \, #1}}{\EvalPots{#2}}}
\newcommand{\ConditionalEvalPotsSlow}[2]{{\color{SEM}%
\frac{\EvalPotsSlow{#2 \, #1}}{\EvalPotsSlow{#2}}%
}}
\newcommand{\cutforspace}[1]{}
\definecolor{MyPurpleDark}{RGB}{140,10,89} %
\newcommand{\localTarget}[0]{{\color{SET}\ell}}
\newcommand{\localTargetNum}[0]{{\color{SET}\widetilde{\ell}}}
\newcommand{\localTargetZ}[0]{{\color{SET}L}}
\newcommand{\fastS}[0]{{\color{SET}S}}
\newcommand{\fastWeight}[0]{{\color{SET}w}}
\newcommand{\fastTotalWeight}[1]{{\color{SET}W_{{\normalcolor#1}}}}
\newcommand{\fastSample}[0]{{\color{SET}x}}
\newcommand{\incl}[0]{{\color{SET}\pi}}
\newcommand{\eot}[0]{\ensuremath{{\tt EOT}}\xspace}
\newcommand{\context}[0]{\xx\xspace}
\newcommand{\prefix}[0]{\ensuremath{\boldsymbol{p}}\xspace}
\newcommand{\mass}[0]{{\tt mass}\xspace}
\newcommand{\weights}[0]{\ensuremath{\fastWeight}\xspace}
\newcommand{\CharProposal}[0]{{\tt character\_proposal}\xspace}
\newcommand*\iftodonotes{\if@todonotes@disabled\expandafter\@secondoftwo\else\expandafter\@firstoftwo\fi} 
\newcommand{\noindentaftertodo}{\iftodonotes{\noindent}{}}
\newcommand{\response}[1]{\vspace{3pt}\hrule\vspace{3pt}\textbf{#1:}}    %
\newcommand{\note}[4][]{\todo[author=#2,color=#3,caption={},#1]{#4}\noindentaftertodo} %
\newcommand{\jason}[2][]{\note[#1]{jason}{gray!40}{#2}}
\newcommand{\NEW}[1]{#1}
\renewcommand*\backref[1]{\ifx#1\relax \else (Cited on p. #1) \fi}
\newtheorem{proposition}{Proposition}
\newtheorem{definition}{Definition}
\DeclareRobustCommand
\definecolor{myGreen}{RGB}{34,139,34}   %
\definecolor{myRed}{RGB}{178,34,34}     %
\newcommand{\cmark}{\textcolor{myGreen}{\ding{51}}}
\newcommand{\xmark}{\textcolor{myRed}{\ding{55}}}
\newcommand{\timvmode}[1]{{\text{#1}}}
\renewcommand{\timvmode}[1]{}
\algrenewcommand\algorithmicindent{1.0em}%
\newcommand{\alglinenumberstyle}[1]{{\tiny\color{black!50}#1}}
\algrenewcommand\alglinenumber[1]{\alglinenumberstyle{#1.}\hspace{-2pt}}
\newcommand{\rightcomment}[1]{{\color{gray} \(\triangleright\) {\smaller\emph{#1}}}}
\algrenewcommand{\algorithmiccomment}[1]{\hfill \rightcomment{#1}}  %
\algnewcommand{\LineComment}[1]{\State\rightcomment{#1}}
\algnewcommand{\LinesComment}[1]{\State\rightcomment{\parbox[t]{\linewidth-\leftmargin-\widthof{\(\triangleright\) 111}}{\RaggedRight#1}}}
\algnewcommand{\LinesCommentWidth}[2]{%
\State%
{%
\color{gray}\smaller\it%
\(\triangleright\) \parbox[t]{#1-\widthof{1}}{#2}%
}%
}
\newcommand{\timeless}[1]{}
\newcommand{\algorithmicfunc}[1]{\textbf{def} {#1}:}
\definecolor{commentcolor}{rgb}
  {0.4, 0.22, 0.33}
\algnewcommand\algorithmicassert{\texttt{assert}}
\algnewcommand\Assert[1]{\State \algorithmicassert(#1)}%
\algnewcommand\algorithmicswitch{\textbf{match}}
\algnewcommand\algorithmiccase{\textbf{case}}
\crefname{section}{\S}{\S\S}
\crefname{figure}{Fig.}{Figs.}
\crefname{algorithm}{Alg.}{}
\crefname{equation}{Eq.}{Eq.}
\crefname{theorem}{Theorem}{}
\crefname{prop}{Proposition}{}
\crefname{definition}{Def.}{}
\crefname{cor}{Corollary}{}
\title{Syntactic and Semantic Control of Large \\Language Models via Sequential Monte Carlo}
\newcommand{\MIT}[0]{\textsuperscript{1}}
\newcommand{\ETH}[0]{\textsuperscript{2}}
\newcommand{\mcgill}[0]{\textsuperscript{3}}
\newcommand{\cifar}[0]{\textsuperscript{4}}
\newcommand{\mila}[0]{\textsuperscript{5}}
\newcommand{\jhu}[0]{\textsuperscript{6}}
\newcommand{\Yale}[0]{\textsuperscript{7}}
\newcommand{\ISI}[0]{\textsuperscript{8}}
\newcommand{\authorsep}[0]{\ \ }
\newcommand{\cosenior}[0]{\ensuremath{\ddagger}}
\author{%
\textbf{João Loula}\thanks{co-first authorship, $^{\cosenior}$co-senior authorship.}\hphantom{$^*$}\MIT \authorsep
\textbf{Benjamin LeBrun}$^*$\mila \authorsep
\textbf{Li Du}$^*$\jhu \authorsep 
\textbf{Ben Lipkin}\MIT \authorsep
\textbf{Clemente Pasti}\ETH \authorsep
\textbf{Gabriel Grand}\MIT 
\\
\textbf{Tianyu Liu}\ETH \authorsep
\textbf{Yahya Emara}\ETH \authorsep
\textbf{Marjorie Freedman}\ISI \authorsep
\textbf{Jason Eisner}\jhu \authorsep
\textbf{Ryan Cotterell}\ETH  %
\\
\textbf{Vikash Mansinghka}\ensuremath{^{\cosenior}}\MIT \authorsep
\textbf{Alexander K. Lew\ensuremath{^{\cosenior}}}\MIT$^,$\Yale \authorsep
\textbf{Tim Vieira}\ensuremath{^{\cosenior}}\ETH \authorsep
\textbf{Timothy J. O'Donnell}\ensuremath{^{\cosenior}}\mcgill$^,$\cifar$^,$\mila
\\
\MIT MIT \authorsep
\ETH ETH Z\"{u}rich \authorsep
\mcgill McGill \authorsep
\cifar Canada CIFAR AI Chair \authorsep
\mila Mila \authorsep
\jhu Johns Hopkins \authorsep
\Yale Yale \authorsep
\ISI ISI \authorsep
\\
\hfil
\texttt{\href{mailto:genlm@mit.edu}{genlm@mit.edu}}
}
\begin{document}

\maketitle

\begin{abstract}
A wide range of LM applications require generating text that conforms to syntactic or semantic constraints. Imposing such constraints can be naturally framed as \emph{probabilistic conditioning}, but exact generation from the resulting distribution---which can differ substantially from the LM’s base distribution---is generally intractable. In this work,
we develop an architecture for controlled LM generation based on sequential Monte Carlo (SMC). Our SMC framework allows us to flexibly incorporate domain- and problem-specific constraints at inference time, and efficiently reallocate computational resources in light of new information during the course of generation. By comparing to a number of alternatives and ablations on four challenging domains---Python code generation for data science, text-to-SQL, goal inference, and molecule synthesis---we demonstrate that, with little overhead, our approach allows small open-source language models to outperform models over 8$\times$ larger, as well as closed-source, fine-tuned ones. 
In support of the probabilistic perspective, we show that these performance improvements are driven by better approximation to the posterior distribution. 
Our system builds on the framework of \citet[][]{lew2023sequential} and integrates with its \emph{language model probabilistic programming language}, giving users a simple, programmable way to apply SMC to a broad variety of controlled generation problems.
\looseness=-1
\\[.25\baselineskip]
\includegraphics[width=.85em,height=.85em]{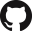}\hspace{0em}
{\url{https://github.com/probcomp/genlm-control}}
\end{abstract}

\section{Introduction}
The goal of \emph{controlled generation} from language models (LMs) is to produce text guided by a set of syntactic or semantic constraints. One prominent example is semantic parsing, or code generation, which involves producing text in a programming (or other formal) language, typically from a natural language prompt. We may wish to use diverse signals to guide code generation, for example:
\begin{itemize}[leftmargin=*]
\item Checking (partial) code statically (type-checking, linting, partial evaluation);
\item Running (partial) code on a test case and checking if it raises an error or returns the wrong answer;
\item Simulating environments (e.g., in robotics or chemistry) and assigning a score to the resulting state;
\item Rolling out possible completions of partial code and computing their max, min, or average score;
\item Asking another language model to critique the code generated so far.
\end{itemize}
Such signals vary along several important dimensions: some are cheap to compute (linting), others are more costly (simulations); some can be evaluated incrementally with each sampled token (language model critique), others provide sparser guidance (running code); some enforce binary hard constraints (type-checking), others yield soft continuous scores (scoring).

One way to represent such signals uniformly is as \emph{potential functions} $\pot(\xx)$ assigning non-negative scores to sequences of tokens $\xx$. Given a set $\Pots$ of such potentials, we will write $\EvalPots{\xx}=\prod_{\pot \in \Pots} \pot(\xx)$. 
We frame the problem of controlled generation probabilistically: We wish to sample from the \emph{global product of experts} distribution on complete sequences $\xx$:
\begin{align}
\label{eq:gpoe}
\gpoe(\xx) = \frac{1}{Z}~ \plm(\xx) \EvalPots{\xx}\timvmode{,}
\end{align}
where $\plm$ is a distribution over complete token sequences defined by an autoregressive LM, and $Z$ is a normalizing constant. The distribution $\gpoe$ can be interpreted as a posterior with $\plm$ as the prior and $\Pots$ as the likelihood function. Importantly, even when both sampling from $\plm$ and evaluating $\EvalPots{\xx}$ are efficient, sampling exactly from $\gpoe$ is generally intractable \citep[][]{rosenfeld2001}.

Two popular sampling-based approaches that avoid the intractability of $\gpoe$ are \emph{locally constrained decoding} and \emph{sample reranking} (see \cref{app:relatedwork} for a detailed review of related work, including other approaches such as MCMC). Locally constrained decoding
uses per-token logit biasing or masking to incorporate signals at each step, for example to ensure that the complete sequence will fall in a specified regular or context-free language \citep[e.g.,][]{shin2021constrained, scholak2021picard, poesia2022synchromesh, willard2023efficient, moskal2024aici, ugare2024improving}. Sample-rerank-based approaches first generate complete sequences and then rerank or reweight these based on the specified set of signals. Examples of this approach include best-of-$n$ reweighting with a reward model \citep{nakano2021webgpt,DBLP:conf/emnlp/KrishnaCWI22,DBLP:conf/iclr/ZhouMHPPCB23,DBLP:conf/nips/GuiGV24,DBLP:conf/icml/MudgalLGLWHCCCS24,ichihara2025evaluation} or filtering samples with a verifier \citep{olausson2023linc, DBLP:conf/nips/ChenDHBSZZ24, DBLP:conf/iclr/LightmanKBEBLLS24, DBLP:journals/corr/abs-2405-14333}. Each approach suffers from significant weaknesses. Locally constrained decoding requires the guiding signals $\Pots$ to be cheap enough to evaluate very frequently (for instance on the full token vocabulary at every step of generation). Moreover, it often introduces greedy approximations that badly distort the distribution \citep[relative to $\gpoe$, ][]{lew2023sequential,park2024grammar}. Sample-rerank does not impose constraints $\Pots$ until full sequences have been sampled and, thus, cannot make use of information available incrementally during generation; this can significantly increase the number of samples needed to find high probability, constraint-satisfying sequences.\looseness=-1

\emph{Sequential Monte Carlo} (SMC) has been proposed as an effective approach to approximate inference for such intractable distributions in other difficult language modeling problems, such as infilling, prompt engineering, and prompt intersection as well as for more traditional tasks in natural language processing \citep{borschinger-johnson-2011-particle, dubbin.g:2012, buys-blunsom-2015-bayesian, lin-eisner-2018-naacl,  lew2023sequential,zhao2024probabilistic}. In this paper, we use SMC to tackle a number of challenging
semantic parsing problems, guiding generation with incremental static and dynamic analyses. When these signals are efficient enough to be used incrementally, our approach incorporates them into \emph{proposal distributions}, gaining the benefits of locally constrained decoding; more costly potentials---as often used in sample-rerank approaches---are incorporated as \emph{twist functions} that reweight partial sequences to favor promising paths \citep[][]{nasseth2019elements}. Our approach emphasizes \textit{programmable} potentials and proposals that can easily be specialized for specific tasks or problems (e.g., by integrating libraries for molecular structure or robotic planning, see \cref{sec:domains}). We contrast this with the use of \textit{learned} proposals or twist functions \citep{lawson2022sixo,zhao2024probabilistic}, which requires costly, problem-specific fine-tuning.

\begin{figure}[t]
    \centering
    \includegraphics[width=\linewidth]{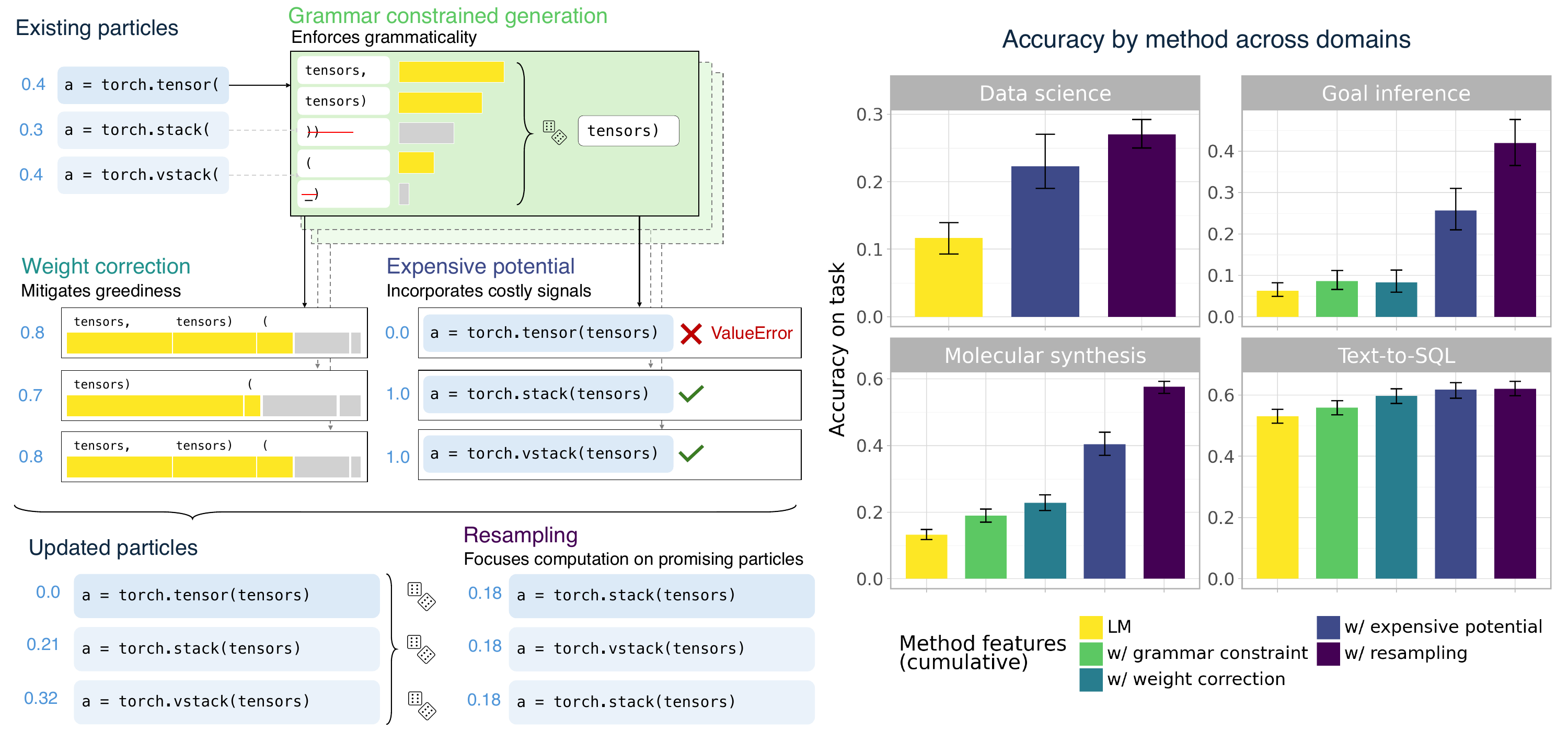}
    \caption{\emph{Controlled generation from LMs via sequential Monte Carlo.} 
    \emph{Left:} We use sequential Monte Carlo to sample from high-quality approximations to posteriors over LM outputs. Partial sequences are repeatedly extended via \textcolor{GCD}{grammar-constrained generation}. We then apply \textcolor{WC}{weight corrections} to mitigate the greediness of locally constrained decoding, as well as \textcolor{SEM}{expensive potentials} to encode rich information that cannot be included in logit masks. Finally, \textcolor{RES}{resampling} focuses computation on promising particles. 
    \emph{Right:} Accuracy gains from these innovations on challenging data science, text-to-SQL, goal inference, and molecule synthesis benchmarks.\looseness=-1}
    \label{fig:main}
    \vspace{-5mm}
\end{figure}
  
Our paper makes the following contributions:
\begin{itemize}[
]

\item \emph{SMC for constrained semantic parsing.} We develop an architecture specializing SMC for code generation under diverse syntactic and semantic constraints (\cref{sec:global-local}). Unlike many previous frameworks for constrained decoding, our algorithm can integrate constraints that cannot be incrementally evaluated over the entire token vocabulary, as well as constraints that can only be evaluated at irregular intervals during generation. The framework emphasizes \textit{programmable inference}~\citep{mansinghka2018probabilistic}, allowing users to deploy proposals and potentials that exploit the structure of application domains. Our system also fully integrates with the language model probabilistic programming framework of \citet[][]{lew2023sequential}.\looseness=-1

\item \emph{Empirical evaluation of performance in diverse domains.} We apply our approach---and six alternatives---to four challenging problem domains: Python code generation for data science, text-to-SQL, goal inference, and molecule synthesis (\cref{sec:domains}). We find that, with little overhead, our approach significantly improves performance across domains, allowing small open-source language models to outperform models over 8$\times$ larger, as well as closed-source fine-tuned models. We additionally find that, with 5--10$\times$ fewer particles, SMC outperforms approaches that only incorporate constraints at the end of generation (\cref{sec:downstream}).

\item \emph{Empirical evaluation of algorithm components.} We run ablation experiments and find that improved performance can be attributed to three algorithmic components: {\color{WC}\emph{weight correction}}, which mitigates the greediness of locally constrained decoding; {\color{SEM}\emph{expensive potentials}}, which incorporate useful signals that several baseline methods cannot integrate; and {\color{RES}\emph{adaptive resampling}}, which adaptively refocuses computation on partial sequences that look more promising.

\item \emph{Empirical validation of the probabilistic perspective.} We derive estimators of the KL divergence from each method's output distribution to the global product of experts (\cref{sec:kl-estimators}). We find that the best-performing methods (i) have outputs that are closer in KL divergence to the global product of experts within each problem instance, and (ii) assign probabilities that are more correlated with downstream performance across problem instances (\cref{sec:gain}).
\end{itemize}

\section{Monte Carlo Inference for Constrained Semantic Parsing}
\label{sec:global-local}

\paragraph{Notation.} We use $\xx$ to refer to a sequence of tokens, with $x_i$ being the $i$\textsuperscript{th} token in the sequence. Let $\xx_{<t} \defeq x_1\cdots x_{t-1}$.  Let $\varepsilon$ denote the empty sequence. We use juxtaposition (e.g., $\xx \, \yy$) to denote sequence concatenation. Let $\A$ be a vocabulary of tokens, and let $\incomplete$ denote the set of all finite sequences of tokens in $\A$. We refer to the set $\incomplete$ as the set of \defn{partial sequences}. We use $\eos \notin \A$ to denote a special token marking the end of sequences not included in $\A$. We define $\Aeos \defeq \A \cup \{\eos\}$ and the set of \defn{complete sequences} $\complete \defeq \{ \xx\,\eos \mid \xx \in \incomplete \}$.

\paragraph{Language models.}
A \defn{language model} $\plm$ is a probability distribution over complete sequences (i.e., $\sum_{\xx \in \complete} \plm(\xx) = 1$). We assume that $\plm$ provides a conditional distribution $\plm(x' \mid \xx)$ over its next token $x' \in \Aeos$ given any sequence $\xx \in \incomplete$; the probability of any complete sequence factors as
\begin{equation}
\plm(\xx) = \prod_{t=1}^{|\xx|} \plm(x_t \mid \xx_{<t})\timvmode{.}
\label{lm:factored}
\end{equation}
We find it convenient to extend the definition of $\plm(\xx)$ from \cref{lm:factored} to partial sequences $\xx \in \incomplete$; note, however, that $\plm(\xx)$ is a \emph{prefix} probability,%
\footnote{I.e., $\plm(\xx)$ is the probability that a \emph{complete} sequence $\xx' \sim \plm$ has the \emph{partial} sequence $\xx$ as a prefix.} so $p$ is \emph{not} a probability distribution over partial sequences.
With this extended definition, we have $\plm(x' \mid \xx) = \frac{\plm(\xx \, x') }{ \plm(\xx) }$ provided $\plm(\xx) > 0$.

\paragraph{Potential functions.}
We consider a set $\Pots$ of domain- or task-specific \defn{potential functions} that encode relevant constraints or preferences as nonnegative scores. Each potential function $\pot \in \Pots$ has the type $\pot \colon (\omnicomplete) \to \mathbb{R}_{\geq 0}$, meaning that it assigns a non-negative real $\pot(\xx)$ when evaluated on some sequence $\xx$, freely using any structure in the sequence so far regardless of whether $\xx$ is a partial or complete sequence. For technical reasons, we assume that all potentials satisfy
$\pot(\xx) = 0 \implies \pot(\xx\,\yy) = 0$, for all $\xx$ and $\yy$ such that $\xx\,\yy \in \complete$.

\paragraph{Target distribution.} We formalize the goal of controlled generation as sampling sequences $\xx \in \complete$ from the \defn{target distribution} given by the \defn{global product of experts} between $\plm$ and $\Pots$:\footnote{Note that care has to be taken to ensure that the sum which defines the normalizing constant in \cref{eqn:global} converges. One sufficient (but not necessary) condition ensuring this is if $\Pots$ is bounded above by a constant.}
\begin{align}
\label{eqn:global}
\gpoe(\xx) = 
\frac{1}{Z}\, \plm(\xx) \EvalPots{\xx}
\quad\text{where}\quad 
Z = \smashoperator{\sum_{\yy \in \complete}} \plm(\yy) \EvalPots{\yy}
\timvmode{.}
\end{align}
\noindent 
For intuition, if $\EvalPots{\xx} \in \{0,1\}$ for all $\xx$ in $\complete$, the global product of experts can be understood as the \defn{rejection sampling} distribution that arises by repeatedly generating $\xx \sim \plm$, and rejecting if $\EvalPots{\xx}=0$.
The normalizing constant $Z$ is the rate at which samples are accepted.  Thus, the expected runtime of rejection sampling is $1/Z$ per accepted sample, making it expensive if $Z$ is small.
Our work aims to accurately approximate the global product of experts with much less computation.

\paragraph{Locally constrained decoding.}
A popular approach\footnote{E.g., \citet[][]{shin2021constrained,scholak2021picard,poesia2022synchromesh,willard2023efficient,ugare2024improving}.\label{fn:locally-constrained-systems}} to enforcing constraints
at inference-time is to apply them before sampling each token. In this approach, at each time step $t$, the current sequence $\xx_{<t}$ is extended with a new token $x_t \sim \lpoe{\Pots}(\cdot \mid \xx_{<t})$ (until $x_t = \eos$) where\footnote{Here our assumption that $\pot(\xx) = 0 \implies \pot(\xx\,\yy) = 0$ ensures that whenever $\EvalPots{\xx_{<t}} = 0$, all extensions will be $0$ as well, making it safe in such cases to define $\ConditionalEvalPots{x}{\xx_{<t}} = 0$.
}
\begin{align}   
\lpoe{\Pots}(x_t \mid \xx_{<t}) \defeq \frac{\plm(x_t \mid \xx_{<t}) \ConditionalEvalPots{x_t}{\xx_{<t}}}{\localZ{\Pots}{\xx_{<t}}}
\quad\text{where}\quad
\localZ{\Pots}{\xx_{<t}} \defeq \sum_{x' \in \Aeos} \plm(x' \mid \xx_{<t}) \ConditionalEvalPots{x'}{\xx_{<t}}\label{eq:localZ}
\end{align}
We write $\lpoe{\Pots}(\xx) \defeq \prod_{t=1}^{|\xx|} \lpoe{\Pots}(x_t \mid \xx_{<t})$ for either the probability of
$\xx \in \complete$ or the prefix probability of $\xx \in \incomplete$. 
Note that in the former case, $\lpoe{\Pots}$ is a distribution over complete sequences.
We call it a \defn{\emph{local} product of experts} model because normalization is performed \emph{locally} (at each step of the sequence), rather than \emph{globally} (once per complete sequence).

Despite its popularity, locally constrained decoding has two important shortcomings. First, for most practical potential functions, the local and global product of experts do not define the same distribution \citep{lew2023sequential, park2024grammar}. In particular, while the global product of experts is defined with respect to complete sequences, the local product typically only has access to the string generated so far and a single token of lookahead---which can lead to myopic sampling down paths that lead to globally poor solutions. In principle, this problem can be mitigated by the choice of intermediate potentials ($\EvalPots{\xx}$ for $\xx \in \incomplete$), which implement more aggressive forms of lookahead.  In particular, locally constrained decoding is an \emph{exact} sampler when $\EvalPots{\xx} = \Pots^*(\xx)$, the \defn{expected future potential} of $\xx$, $\Pots^*(\xx) \defeq \E_{\xx' \sim \plm}\left[ \Pots(\xx') \,\middle|\, \xx \text{ is a prefix of } \xx' \right]$.\footnote{$\Pots^*$ is also known in the SMC literature as the \emph{optimal twist function} \citep[e.g.,][]{zhao2024probabilistic}.}  Unfortunately, much like $Z$, computing $\Pots^*$ exactly is typically intractable.  Although we may seek to approximate it, for example, by learning \citep{zhao2024probabilistic} or adaptive methods \citep{park2024grammar}, here we instead focus on variants of locally constrained decoding which marginalize over the immediate next token as in \cref{eq:localZ}; see \cref{fn:locally-constrained-systems}.  Our experiments (\cref{sec:experiments}) compare our method to this dominant form of local decoding from the literature, using efficient tests for whether the addition of a single candidate token can satisfy the constraint.\looseness=-1

The second, related problem with locally constrained decoding is that
the local product of experts can only be sampled efficiently when it is possible to cheaply evaluate the potentials $\pot \in \Pots$ on all possible one-token continuations $\xx_{<t}\, x'_t$ of the current sequence. For some constraints (e.g., checking membership or prefixhood in the language of a regular expression or context-free grammar), algorithms exist for efficient parallel evaluation across tens of thousands of possible continuations. However, for many of the constraints of interest in the present paper (including several listed in \cref{table:tasks}, for example, error-checking with test cases) this is not feasible. In what follows, we will assume that the set of potentials $\Pots$ can be partitioned into \textbf{expensive potentials} $\PotsSlow$, which are too costly to use as part of locally constrained decoding, and \textbf{efficient potentials} $\PotsFast$, which can be used in sampling from the local product of experts.\looseness=-1

\paragraph{Importance sampling.} The shortcomings of local decoding can be addressed with \defn{importance sampling}, a Monte Carlo technique for approximating intractable distributions. We describe a particular application of the technique specialized to our setting. %
Here, we use the local product of experts model $\lpoe{\PotsFast}$ (abbreviated $\lpoeFast$) with respect to just $\PotsFast$ as a \defn{proposal distribution}, from which we sample multiple complete \defn{particles} $\xx^{(1)}, \ldots, \xx^{(N)} \simIID \lpoeFast$. 
For each particle $\xx^{(i)}$ we define its \defn{importance weight} $\w^{(i)}$ as
\begin{align}\label{eq:importanceweights}
\w^{(i)} 
\defeq
\frac{\plm(\xx^{(i)}) \cdot \EvalPots{\xx^{(i)}}}{\lpoeFast(\xx^{(i)})}
=
\left(\prod_{t=1}^{|\xx^{(i)}|} 
\localZFast{\xx^{(i)}_{<t}}
\right) 
\cdot \EvalPotsSlow{\xx^{(i)}} 
\timvmode{.}
\end{align}
The numerator is an unnormalized variant of the target distribution $\gpoe$, which we write as $\target{}$ hereafter, while the denominator $\lpoeFast$ is the proposal distribution that was used to draw the sequence. These weighted particles define the following \defn{posterior approximation}: $\widehat{\gpoe}(\xx) \defeq \frac{\sum_{i=1}^N \w^{(i)} \indicator{\xx = \xx^{(i)}}}{\sum_{j=1}^N \w^{(j)}}$, which under mild conditions converges to $\gpoe$ as $N$ grows.\footnote{However, the number of particles required to obtain a \emph{good} approximation of the target distribution is exponential in the KL divergence between target $\gpoe$ and proposal $\lpoeFast$ \citep{chatterjee.s:2018}.
}
Our importance weights simplify as shown in \cref{eq:importanceweights}, and we note how they correct for the two problems we identified with $\lpoeFast$. The first factor, $\prod_{t=1}^{|\xx^{(i)}|} 
\localZFast{\xx^{(i)}_{<t}}$,
corrects for the greediness of $\lpoeFast$, penalizing particles where \emph{all} possible continuations $x_t \in \Aeos$ score poorly in context.
The second factor, $\EvalPotsSlow{\xx^{(i)}}$, incorporates the expensive potentials that could not be used in $\lpoeFast$. These importance weights can be computed efficiently: the first factor is already computed as a byproduct of sampling from $\lpoeFast$, and the second factor is computed by running each of the expensive efficient potentials once on each $\xx^{(i)}$.
\looseness=-1

\paragraph{Sequential Monte Carlo.} While importance sampling addresses several shortcomings of local decoding, it too suffers from a major weakness: weight corrections and expensive potentials are not integrated until after a complete sequence has been generated from the proposal. This is despite the fact that critical information about whether a sequence can satisfy a constraint is often available much earlier and can be used to avoid large amounts of unnecessary computation. Sequential Monte Carlo~\citep[\defn{SMC}; e.g.,][]{chopin2020introduction}, is a natural generalization of importance sampling that constructs importance-weighted samples from a \emph{sequence} of unnormalized target distributions 
$\langle \target{t}\rangle_{t=0}^\infty$ to arrive at the final unnormalized target $\target{}$. In our case, we consider intermediate targets $\target{t}$ defined on the sequence of spaces $\growing{t} \defeq \{\xx \in \incomplete \mid |\xx| = t\} \cup \{ \xx \in \complete \mid |\xx| \leq t\}$, that is, partial sequences of length equal to $t$ and complete sequences of length less than or equal to $t$. The targets are defined as
\begin{equation}
\target{t}(\xx) = \plm(\xx) \EvalPots{\xx}, \text{ for }\xx \in \growing{t}\timvmode{.}  \label{eq:intermediate-targets} %
\end{equation}

Note that $\target{}$ and $\target{t}$ are unnormalized distributions over different spaces: $\complete$ and $\growing{t}$ respectively. Whereas $\target{}$ only considers potentials on \emph{complete} sequences, $\target{t}$ depends also on the behavior of the potentials $\PotsSlow$ when applied to \emph{partial} sequences.  But as $t \to \infty$, there is less and less mass on partial sequences, and $\target{t}$ approaches $\target{}$ no matter how the partial potentials are defined.\looseness=-1

The particles for $\target{t}$ are drawn as \emph{prefixes} from $\lpoeFast$ (stopping at length $t$ if \eos has not been reached),
again requiring an importance weighting correction. The importance weight at time $t$ can be re-expressed as the importance weight from time $t-1$ times a correction factor for step $t$:
\begin{subequations}
\begin{align}
\w^{(i)}_t 
&\defeq 
\frac{\target{t}(\xx_{<t}^{(i)} \, x_{t}^{(i)})}{\lpoeFast(\xx_{<t}^{(i)}\, x_{t})}\\
&= 
\target{0}(\xx_{<1}^{(i)})
\prod_{t'=1}^t
\frac{\target{t'}(\xx_{<t'}^{(i)} \, x_{t'}^{(i)})}{\target{t'-1}(\xx_{<t'}^{(i)}) \, \lpoeFast(x_{t'} \mid \xx_{<t'}^{(i)})} 
\\
&= 
\w^{(i)}_{t-1} \frac{\target{t}(\xx_{<t}^{(i)} \, x_t^{(i)})}{\target{t-1}(\xx_{<t}^{(i)}) \, \lpoeFast(x_t \mid \xx_{<t}^{(i)})}\\ 
&= \w^{(i)}_{t-1} \cdot 
\localZFast{\xx^{(i)}_{<t}}
\cdot 
\ConditionalEvalPotsSlow{x_t^{(i)}}{\xx^{(i)}_{<t}}
\timvmode{.}
\end{align}
\end{subequations}
The {sequential Monte Carlo} algorithm generates approximations to each intermediate target $\target{t}$, using \defn{resampling steps} to reallocate computation from less to more promising partial sequences. 
We begin with a collection of $N$ weighted particles $(\xx^{(i)}, \w^{(i)}\timeless{_0}) = (\varepsilon, 1)$, where $\varepsilon$ is the empty sequence of tokens. Then, starting at $t=1$, we repeat the following three steps until all particles are \eos-terminated (i.e., $\xx^{(i)} \in \complete$ for all $i$):
\begin{enumerate}[noitemsep,topsep=0pt,parsep=0pt,partopsep=0pt,leftmargin=*]
\item {\color{GCD}\emph{Extend.}}
For each incomplete particle $\xx^{(i)}$,
sample $x_{t}^{(i)} \!\sim\! \lpoeFast( \cdot \!\mid\! \xx_{<t}^{(i)})$ and update $\xx^{(i)} \!\gets\! \xx^{(i)} \, x_t^{(i)}$.\looseness=-1
\item {\color{WC}\emph{Reweight.}} For each extended particle $\xx^{(i)}$, update the weight $\w^{(i)} \!\gets\! \w^{(i)} \, 
\localZFast{\xx^{(i)}_{<t}}
\ConditionalEvalPotsSlow{x_t^{(i)}}{\xx^{(i)}_{<t}}$.

\item {\color{RES}\emph{Resample.}} 
Sample ancestor indices $a^{(1)}, \ldots, a^{(N)} \simIID \mathrm{Categorical}\mleft(\frac{\w^{(1)}\timeless{_t}}{W\timeless{_t}}, \dots, \frac{\w^{(N)}\timeless{_t}}{W\timeless{_t}}\mright)$ where $W\timeless{_t} = \sum_{i=1}^N \w^{(i)}\timeless{_t}$. Then, reassign all particles $(\xx^{(i)}, \w^{(i)}\timeless{_t}) \gets (\xx^{(a^{(i)})}, \frac{W\timeless{_t}}{N})$ for all $i$ simultaneously.\footnote{In practice, we only resample if the \defn{effective sample size} $\ess \defeq \frac{\left(\sum_{i=1}^N \w\timeless{_t}^{(i)} \right)^2}{ \sum_{i=1}^N (\w\timeless{_t}^{(i)})^2}$
is under a threshold (e.g., $\frac{N}{3}$).\looseness=-1}\looseness=-1
\end{enumerate}

The extension step extends each incomplete particle with a next token proposed by the local product of experts $\lpoeFast$. The reweighting step computes the updated importance weight, incorporating a new factor for the new token. The resampling step exploits any early signal available in the updated weights at time $t$ to abandon some less promising incomplete particles (which are unlikely to be chosen as ancestors) and focus more future computation on more promising particles (which are likely to be chosen as ancestors multiple times and then will be extended in multiple ways at time $t+1$).\jason{However, I don't think it is helpful to resample complete particles multiple times.  Did you use a scheme that avoids this? \response{tim} I think that is automatically avoided, but we'd have to ask Ben and Tim and I think they are offline. \response{jason} I don't see why automatic?  I think it's nontrivial: not too hard, but requires some special handling \response{tim} that's all I meant. I think complete particles just fall out of the beam. \response{jason} That's probably best: but not described here.  Reweighting is "for each extended particle" (some of which have just become complete) and the other weights are (necessarily) left alone, so according to the current description, all particles can still be resampled.  I think the right answer is that if the complete particles are WLOG (1) \ldots ($C$), then construct both the categorical distribution and $W$ over $i=C+1,\dots,N$ only.  Then sample $C+1,\ldots C+N$ from that distribution and update $N$ to be $C+N$.  This allows us to redistribute the total weight $W$ of the incomplete particles over fully $N$ incomplete particles that can be extended.  The complete particles stay inert at the front thereafter with their weights preserved (in effect, they are implicitly extended deterministically (prob 1) with an additional \eos symbol at each step, which doesn't change their weight). However, you probably also want to eventually decrease $N$ once $W$ is only a small fraction of the total weight, so that you can terminate!
\response{tim o} we need to wrap this up soon so Joao can make it 10 pages again and submit also I am flagging. \response{jason} Noted \response{timo} he's just waiting for you to finish the commenting, so he can make it fit again and submit, so maybe we can stick to high level things and we can fix the remainder in arXiv \response{jason} Ok.  I hope (and expect) the implementation is something like what I said above---if the implementation is as currently described in the text, the experiments could have left some points on the table, with degeneracy of the completed particles at each step} This reallocation of computation often leads to dramatic improvements in inference quality---without it, SMC would reduce to the previous importance sampling method.

\paragraph{Further extensions.} We further extend our SMC implementation in two ways. 
First, potentials in $\PotsFast$ may still be modestly expensive to evaluate on the entire vocabulary. In these cases, we develop cheap stochastic approximations to the local product of distributions and use these as proposals during the \emph{Extend} step. The incremental weight computation must also be corrected to account for these approximations; we derive stochastic unbiased estimators of the incremental weights that can be soundly used within SMC (see \cref{sec:char-proposal}). 
Second, the intermediate targets $\target{t}$ do not need to advance token-by-token; in some domains, it is beneficial to consider more semantically meaningful increments. For example, in one of our experiments, the intermediate target $p_t$ is defined over the space of all partial Python programs containing $t$ or fewer lines of code (rather than tokens); the \emph{Extend} step then samples a different number of tokens per particle, waiting in each partial sequence until a new full line has been generated. Such strategies can lead to better \emph{particle alignment}~\citep{lunden2018automatic}, making resampling more effective.

\section{Experiments}
\label{sec:experiments}
We compare seven approaches to constrained generation:
\begin{enumerate}[noitemsep,topsep=0pt,parsep=0pt,partopsep=0pt,leftmargin=*]
    \item \emph{Language model (Base LM).} This method simply samples from the \NEW{base language model $\plm$ (see \cref{sec:domains} for details on the language models used).}

    \item \emph{Language model with grammar constraint \NEW{(Locally constrained decoding).}} \NEW{This is the approach used by much prior work (\cref{fn:locally-constrained-systems}).} In each of our domains, we formulate a context-free grammar (CFG) encoding a notion of syntactic well-formedness appropriate for the domain (see \cref{sec:domains}). We let $\PotsFast$ encode the binary function that determines whether its input is a prefix of some valid sequence in the grammar's language. This baseline directly samples from $\lpoeFast$, i.e., it uses per-token logit masking to greedily enforce the CFG constraint.

    \item \emph{Language model with grammar constraint and weight correction \NEW{(Grammar-only IS).}} This method generates particles from $\lpoeFast$, then computes importance weights to correct toward the global product of $\plm$ and $\PotsFast$. These weights mitigate some of the greediness of local-product-of-experts sampling, but do not yet integrate any potentials beyond $\PotsFast$.

     \item \NEW{\emph{Language model with grammar constraint, weight correction, and resampling (Grammar-only SMC).} This method is a straightforward application of \citet{lew2023sequential} to locally constrained decoding and is similar to \citet{park2024grammar}, which also attempts to correct for the greediness of locally constrained decoding. As in the previous method, it targets the global product of $\plm$ and $\PotsFast$ but uses resampling to reallocate computation to promising particles.}

    \item \emph{Language model with grammar constraint and expensive potential \NEW{(Sample-Rerank).}} Sample-Rerank is a common family of approaches for incorporating an external signal into an LM's generations post-hoc, for instance by choosing the best-of-$n$ particles via a reward model \citep{nakano2021webgpt,DBLP:conf/emnlp/KrishnaCWI22,DBLP:conf/iclr/ZhouMHPPCB23,DBLP:conf/nips/GuiGV24,DBLP:conf/icml/MudgalLGLWHCCCS24,ichihara2025evaluation} or filtering via a verifier \citep{olausson2023linc, DBLP:conf/nips/ChenDHBSZZ24, DBLP:conf/iclr/LightmanKBEBLLS24, DBLP:journals/corr/abs-2405-14333}. 
    In each domain, we formulate an additional potential $\PotsSlow$ that encodes task-specific signals of sequence quality (see \cref{sec:domains}). This baseline generates grammar-constrained sequences from the local product of experts, then reweights each sequence $\xx$ by $\PotsSlow(\xx)$. 

    \item \emph{Language model with grammar constraint, weight correction, and expensive potential \NEW{(Full IS).}} This is the full importance sampling method described in \cref{sec:global-local}, with $\Pots = \PotsFast \cup \PotsSlow$. Unlike in the previous method, the importance weights here include correction terms that mitigate the greediness of local sampling, targeting the global product $\gpoe$. We include this method primarily as an ablation of our next method (SMC), modified not to include incremental resampling.

    \item \emph{Language model with grammar constraint, weight correction, expensive potential, and resampling \NEW{(Full SMC).}} \NEW{This method includes all of the algorithmic contributions of our approach.} It is the full sequential Monte Carlo algorithm, with $\Pots = \PotsFast \cup \PotsSlow$. It targets the same global posterior $\gpoe$ as the previous method but uses resampling to reallocate computation to promising particles. 
\end{enumerate}

\NEW{We report results using $N=10$ particles; see \cref{sec:acc-n-particles} and \cref{fig:particles_lms} for downstream accuracy results for a varying number of particles. We ran experiments on GCP instances with 1 A100 GPU and 12 vCPUs (our CFG parser is implemented for CPU and is parallelized across particles), with the exception of the Data Science domain, for which we used 4 H100 GPUs and 64 vCPUs}.  

\subsection{Domains}
\label{sec:domains}

\begin{table}[t]
\small
\centering
\caption{Summary of tasks and potential functions. Examples are truncated for brevity. Full prompts include additional information.}
\label{table:tasks}

\rowcolors{3}{white}{lightgray!20!white}

\resizebox{\textwidth}{!}{
\begin{tabular}{m{0.2\linewidth}|>{\centering\arraybackslash}m{0.1\linewidth}>{\centering\arraybackslash}m{0.15\linewidth}|>{\centering\arraybackslash}m{0.3\linewidth}>{\centering\arraybackslash}m{0.25\linewidth}}
        \toprule
        \multirow{2}{*}{\textbf{Task}} &  \multicolumn{2}{c|}{\textbf{Potentials}}  & \multicolumn{2}{c}{\textbf{Examples}}\\
        & $\PotsFast$ & $\PotsSlow$ & \textbf{Prompt} & \textbf{Output}  \\
        \midrule 

           {Goal Inference}  & STRIPS parser &  Plan simulation & Write the STRIPS goal condition for the planning problem described below [...]. The STRIPS initial condition is: [...]  & \texttt{(:goal (and (arm-empty) (on-table b1)} [...]\\ 
        
         {Python Data Science}  &   - &  Error-checking with test cases & \vspace{1mm} Here is a sample dataframe:
[...] I'd like to add inverses of each existing column to the dataframe [...] \vspace{1mm} & \texttt{result = df.join(df.apply(lambda x: 1/x)} [...] \\

        {Text-to-SQL} &  SQL parser & Alias and table-column checking & \vspace{1mm} Here is a database schema: [...] For each stadium, how many concerts are there? \vspace{1mm}  & \texttt{SELECT T2.name, COUNT(*) FROM concert AS T1} [...] \\ 
        
         {Molecular Synthesis} &SMILES parser  & Incremental molecule validation & Given the following list of molecules in SMILES format, write an additional molecule [...]  &  \texttt{CC1=CC2(OC=N)C(=O)} [...] \\
        \bottomrule
    
\end{tabular}
}
\vspace{-4mm}
\end{table}

We study the performance of our proposed sampling methods on four challenging semantic parsing domains, summarized in \cref{table:tasks}; see \cref{app:domains} for further details.\jason{one column in the table has two boldfaced numbers.  I assume that's because they're not statistically distinguishable at $p=0.05$ under a paired test, but the legend doesn't say.}

\begin{itemize}[leftmargin=*]
\item \textbf{Goal inference (Planetarium).} 
\emph{Task:} Formally specify an agent's goal in the STRIPS subset of the PDDL planning language, based on a natural-language description of the goal and PDDL code detailing the agent's initial conditions and plan for achieving it. 
\emph{Data:} Blocksworld tasks with up to 10 objects from the Planetarium benchmark~\citep{zuo2024planetarium}. 
\emph{Metric:} Accuracy with respect to ground-truth PDDL goal. 
\emph{Base LM:} Llama 3.1 8B. \emph{Grammar:} STRIPS syntax for goals within Planetarium Blocksworld's domain definition. 
\emph{Expensive potential:} Run a simulation with a ground-truth plan and check whether the resulting state conforms to the predicted (partial) goal. 

\item \textbf{Python for data science (DS-1000).} 
\emph{Task:} Generate Python code that uses standard data science libraries (NumPy, PyTorch, Pandas, etc.) to solve a task specified in natural language and via (executable) test cases. 
\emph{Data:} DS-1000 benchmark~\citep{lai2023ds}. 
\emph{Metric:} Accuracy of the generated program with respect to the provided test cases. \emph{Base LM:} Llama 3 70B. 
\emph{Grammar:} We use a trivial potential $\PotsFast(\xx) = 1$, as we find that the unconstrained LM reliably generates grammatical Python (that may nonetheless induce runtime errors). 
\emph{Expensive potential:} Given a partial program $\xx$, $\PotsSlow$ truncates $\xx$ to the longest prefix of the sequence that consists of only valid Python statements (discarding any incomplete material at the end), and executes the resulting (partial) program on the provided test case, checking for runtime errors.

\item \textbf{Text-to-SQL (Spider).} 
\emph{Task:} Generate SQL queries from a natural language question and a database schema. 
\emph{Data:} Spider development split~\citep{yu-etal-2018-spider}. 
\emph{Metric:} Execution accuracy (whether the generated SQL query, when run against a test database, produces the same results as the ground-truth SQL query). 
\emph{Base LM:} Llama 3.1 8B-Instruct. 
\emph{Grammar:} SQL context-free grammars released by~\citet{roy2024benchclamp}, which enforce valid SQL syntax. 
\emph{Expensive potential:} Check whether column names in the generated (partial) query actually belong to the queried tables, modulo aliasing. (The grammar ensures only that the column names exist in \emph{some} table.)

\item \textbf{Molecular synthesis (GDB-17).} 
\emph{Task:} Generate drug-like molecules in the SMILES format~\citep{weininger1988smiles}. 
\emph{Data:} Few-shot prompts constructed by repeatedly choosing 20 random examples from the GDB-17  dataset~\citep{ruddigkeit2012enumeration}. 
\emph{Metric:} Quantitative Estimate of Drug-likeness~\citep[QED; ][]{bickerton2012quantifying}, a standard molecular fitness function. 
\emph{Base LM:} Llama 3.1 8B. 
\emph{Grammar:} SMILES syntax for molecules. \textit{Expensive potential:} A SMILES prefix validator implemented in the Python \emph{partialsmiles} library~\citep{partialsmiles}.\looseness=-1
\end{itemize}

\subsection{Evaluation of Downstream Performance}
\label{sec:downstream}

\begin{table}[t]
\centering
\caption{Comparison of method performance across domains with bootstrapped 95\% confidence intervals. For brevity, \textit{grammar constraint} and \textit{weight correction} are abbreviated as \textit{grammar} and \textit{correction}, respectively.\looseness=-1
}
\label{table:acc}
\rowcolors{2}{lightgray!20!white}{white}
\resizebox{\textwidth}{!}{
\begin{tabular}{l cccc}
\toprule
\multirow{2}{*}{\textbf{Method}} & \multicolumn{4}{c}{\textbf{Score}} \\ 
\hiderowcolors
\cmidrule(lr){2-5}
 &  \textbf{Goal inference} & \textbf{Molecular synthesis} & \textbf{Data science} & \textbf{Text-to-SQL} \\
 \showrowcolors
\midrule
Base LM &  0.063 (0.05, 0.08) & 0.132 (0.12, 0.15) & 0.213 (0.19, 0.24) & 0.531 (0.51, 0.55) \\ 
 \emph{w/ grammar constraint} \NEW{(Locally constrained Decoding)} & 0.086 (0.07, 0.11) & 0.189 (0.17, 0.21) & - & 0.559 (0.54, 0.58)\\
 \emph{w/ grammar, weight correction} \NEW{(Grammar-only IS)} & 0.083 (0.06, 0.11) & 0.228 (0.21, 0.25) & - & 0.597 (0.57, 0.62)\\
 \emph{w/ grammar, potential} \NEW{(Sample-Rerank)} & 0.289 (0.24, 0.34) & 0.392 (0.36, 0.42) & - & 0.581 (0.56, 0.60) \\
\NEW{\emph{w/ grammar, correction, and resampling} (Grammar-only SMC)} & \NEW{0.401 (0.34, 0.46)} & \NEW{0.205 (0.18, 0.23)} & - & \NEW{0.596 (0.57, 0.62)} \\
 \emph{w/ grammar, potential, and correction} \NEW{(Full IS)} & 0.257 (0.21, 0.31) & 0.404 (0.37, 0.44) & 0.346 (0.31, 0.39) & \textbf{0.618} (0.59, 0.64) \\
 \emph{w/ grammar, potential, correction, and resampling} \NEW{(Full SMC)} & \textbf{0.419} (0.37, 0.48) & \textbf{0.577} (0.56, 0.59) & \textbf{0.407} (0.36, 0.45) & \textbf{0.620} (0.60, 0.64) \\
\bottomrule
\end{tabular}
}
\vspace{-2mm}
\end{table}
\begin{figure}[t]
\vspace{-.5em}
\begin{center}
\subfloat
{\vspace{.1em}{\includegraphics[width=.25\linewidth]{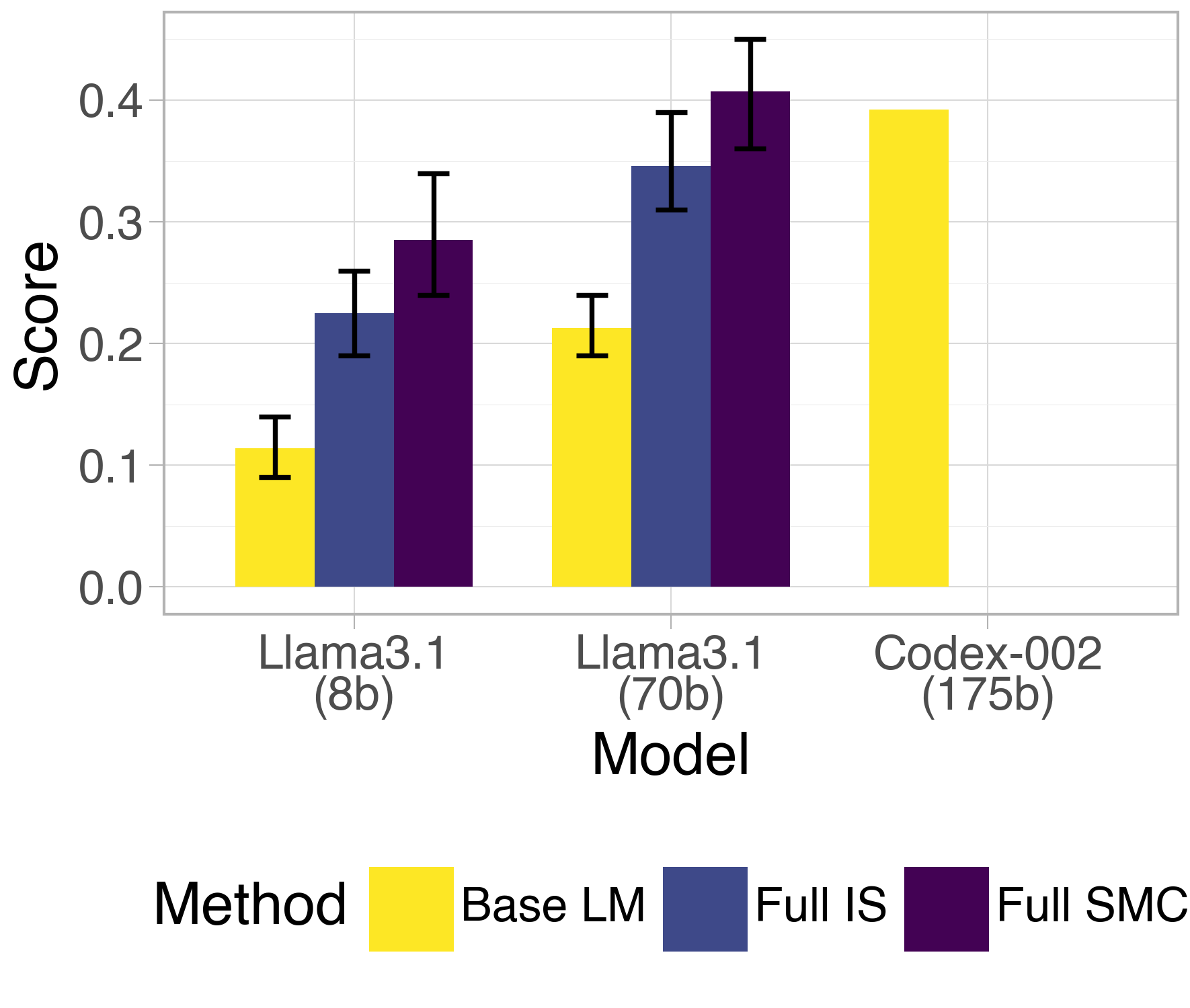} }}%
\subfloat{{\includegraphics[width=.75\linewidth]{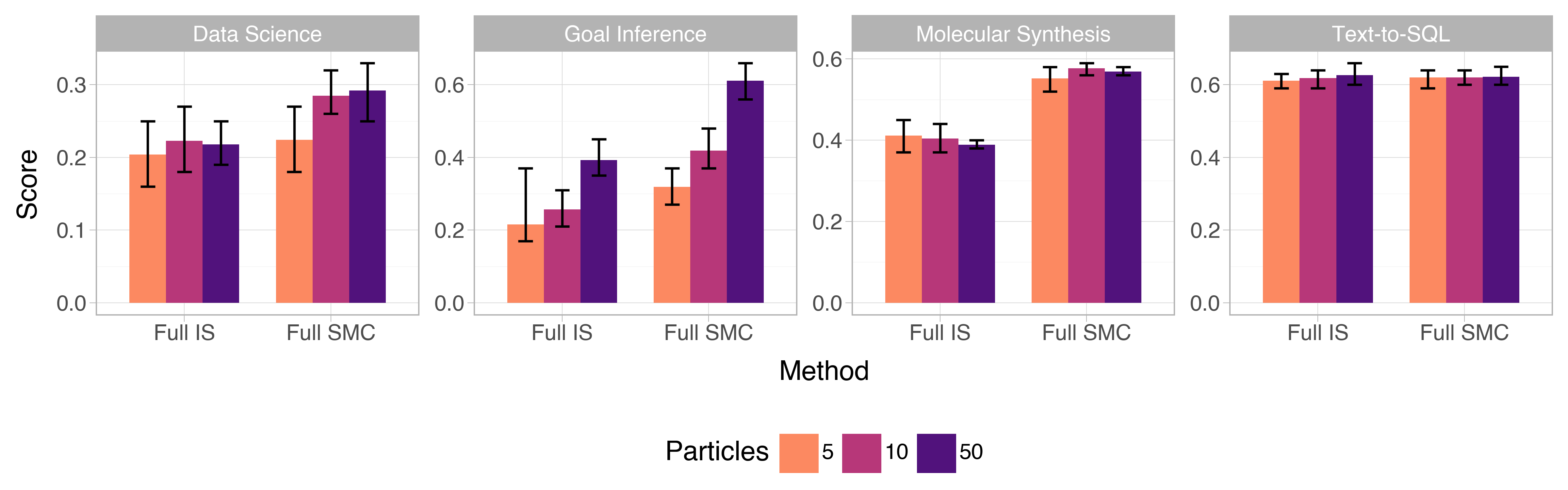} }}%
\end{center}
\caption{\textbf{Left:} Performance on the Data Science task (DS-1000) for different models and methods. Codex-002 performance as reported in \citet{lai2023ds}. \textbf{Right:} Performance across all tasks for Full IS and Full SMC with 5, 10, and 50 particles. \textbf{Error bars:} bootstrapped 95\% confidence intervals.\looseness=-1}
\label{fig:particles_lms}
\vspace{-1.4\baselineskip}
\end{figure}

We begin by investigating whether our approach leads to significant performance gains.  \Cref{table:acc} reports posterior-weighted accuracy for our approach and ablations of its components: grammar constraints, weight corrections, expensive potentials, and resampling. \NEW{We first summarize the observed effects of each component in our approach:}

\paragraph{{\color{GCD}Grammar constraints.}} In line with previous literature \citep[e.g.,][]{shin2021constrained,scholak2021picard,poesia2022synchromesh,wang2024grammar}, we find that the addition of a grammar constraint via $\PotsFast$ improves downstream accuracy relative to the base LM across all domains in which it is used, even without the use of weight corrections. 

\paragraph{{\color{SEM}Expensive potentials.}} Furthermore, we observe that integrating expensive potentials $\PotsSlow$ improves accuracy in models. Even without any weight corrections, the improvement in the goal inference, data science, and molecular synthesis domains is large; in the text-to-SQL domain, it is smaller but statistically significant (paired permutation test, $p < 0.01$). This suggests that making use of information that cannot be efficiently encoded in logit masks can greatly improve performance.

\paragraph{{\color{WC}Weight corrections.}}%
Although the use of $\PotsFast$ and $\PotsSlow$ alone leads to significant gains in downstream accuracy, these gains can be amplified with the addition of weight corrections. In cases without the expensive potential, weight corrections provide significant albeit relatively small gains in accuracy across three domains; in goal inference, it does not significantly affect performance. In the presence of the expensive potential, adding weight corrections improves accuracy for text-to-SQL and has no effect on goal inference and molecular synthesis. Overall, these results indicate that debiasing samples from a local product of experts to correctly target the global product of experts often significantly improves downstream accuracy and never harms it.  That said, the accuracy gains attributable to weight corrections are modest compared to other components of the algorithm, which suggests that the bias from locally constrained decoding may be less severe in these semantic parsing domains than has been observed in other domains \citep[e.g., constrained generation of natural language,][]{lew2023sequential}.\looseness=-1

\paragraph{{\color{RES}Resampling.}} We observe that the addition of resampling steps improves downstream accuracy in all domains except text-to-SQL, for which they neither significantly improve nor hurt performance. These results motivate adaptively focusing computation on promising partial sequences.

\paragraph{Other Evaluations.}
Next, we study the effects of varying the base language model, the number of particles used by different methods, and the computational cost of our approach: \Cref{tab:acc-small-lm,tab:acc-n-particles,tab:computational-cost} in the appendix report the results of these experiments. We summarize key findings:
\begin{itemize}
\item \textbf{Our approach allows smaller LMs to outperform larger ones:} In 3 out of 4 domains (Data Science, Molecular Synthesis, Goal Inference), Full SMC allows small language models to outperform models over 8 times larger (see \cref{table:acc,tab:acc-small-lm}). These gains persist on larger models: \cref{fig:particles_lms} shows how our method allows Llama 3.1 70b to outperform Codex-002, which has 175b parameters and is fine-tuned for coding tasks.

\item \textbf{Our approach makes better use of resources than approaches that apply constraints only at the end of generation:} In 3 out of 4 domains (Data Science, Molecular Synthesis, Text-to-SQL), Full SMC performs as well as or better than Full IS while using one-tenth of the particles (see \cref{fig:particles_lms} and \cref{tab:acc-n-particles}); in the remaining domain (Goal Inference), Full SMC outperforms IS with one fifth (10 vs 50) or one half (5 vs 10) of the particles. This is in line with the arguments drawn in \cref{sec:global-local} for the poor scaling of importance sampling and the benefits of resampling.

\item \textbf{Our approach incurs minimal computational overhead:} At every token, our SMC approach incurs two computational overheads relative to a simple locally constrained decoding baseline: resampling and computing expensive potentials. Though the cost of resampling is negligible, computing expensive potentials presents a more significant cost that varies across domains: \cref{tab:computational-cost} shows that cost rarely rises above $\sim$30ms per token. In general, this cost is reduced by two factors: (i) expensive potentials often need to run expensive computations only at larger, semantically meaningful units (for instance, the end of a SQL clause or a Python statement) rather than at every token---therefore significantly lessening the average cost per token, (ii) expensive potentials operations are often performed on CPU rather than GPU, and therefore cost fewer dollars per hour.
\end{itemize}

\vspace{-\baselineskip}
\subsection{Validation of the Probabilistic Perspective}
\label{sec:gain}
\vspace{-.5\baselineskip}

\begin{figure}[t]
\caption{Estimated KL between the algorithm and the global product of experts for a representative problem instance in each domain. Values closer to 0 indicate that the algorithm is better at approximating $\gpoe$. Significant differences are indicated with ** for $p < 0.01$ and *** for $p < 0.001$ (t-test). Algorithms use $N = 10$ particles.\looseness=-1}
\begin{center}
\includegraphics[width=.9\linewidth]{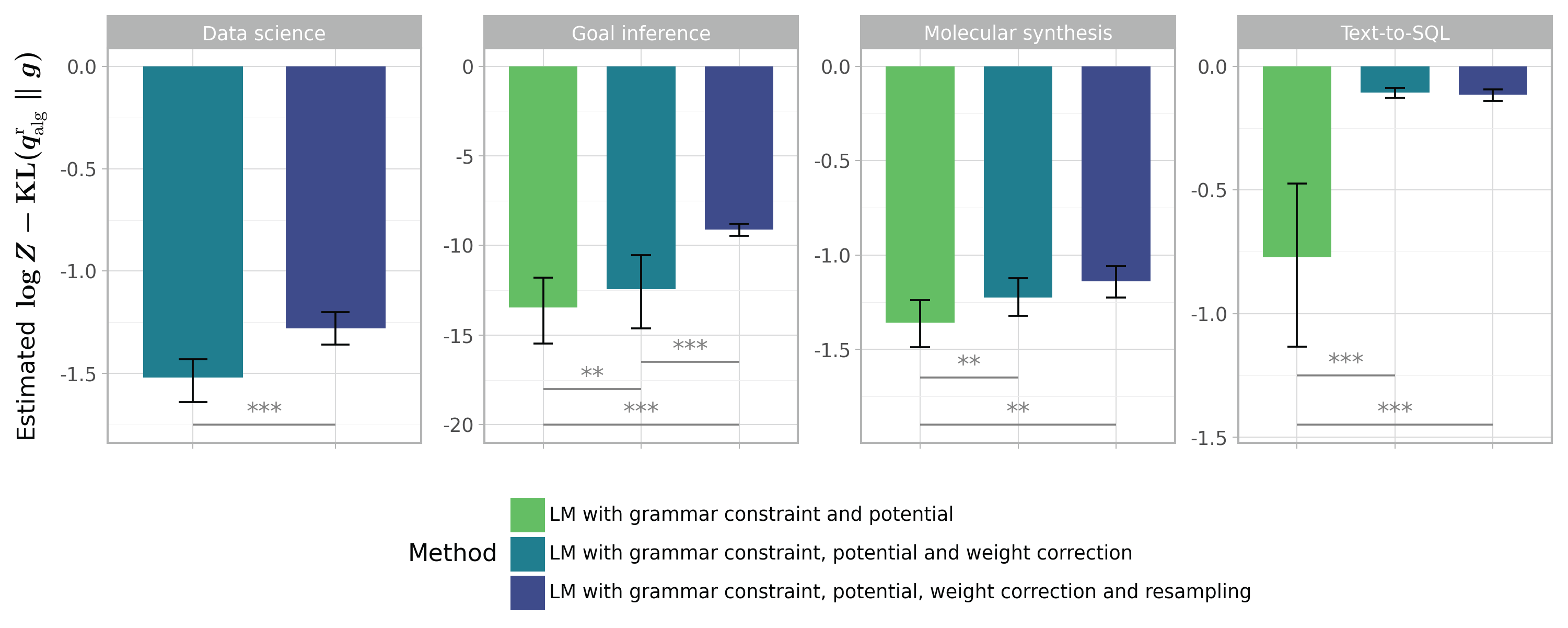}
\end{center}
\label{fig:kl-plot}
\vspace{-2\baselineskip}
\end{figure}

The best-performing methods from the previous section were designed to approximate the global product of experts distribution. In this section, we investigate how closely each of these methods approximates this global distribution and whether the downstream performance results from the previous section are driven by the quality of the probabilistic inference. In particular, we find: 

\paragraph{Within each problem instance, the best-performing methods have outputs that are closer in KL divergence to the global product of experts.} We consider the distribution over sequences $q^\text{r}_\text{alg}(\xx)$ defined by each algorithm (see \cref{sec:kl-estimators} for details and derivations).
For each $q^\text{r}_\text{alg}$, we estimate a tractable correlate of the KL between the algorithm and the global product of experts: $\log Z - \KL(q^\text{r}_\text{alg} \midpipe \gpoe)$.
We refer to this quantity as the \emph{approximation quality}. Since the term $\log Z$ is algorithm-independent, we can directly compare the estimated approximation quality across algorithms to determine which ones have lower KL divergence relative to the global product of experts. However, because $\log Z$ is instance-specific, these comparisons can only be made at the instance level. Accordingly, for each domain, we select the instance with the median unique accuracy as a representative example. \Cref{fig:kl-plot} visualizes estimated approximation quality on these examples across all methods, which include \PotsSlow.
Estimates were computed across 100 runs of each algorithm.

In all domains, sampling from the local product of experts without weight correction leads to significantly lower approximation quality relative to the methods that approximate the global product. The addition of resampling steps also significantly improves approximation quality in the data science and goal inference domains, but does not significantly change quality in the molecular synthesis and text-to-SQL domains. These trends in approximation quality are consistent with those observed in our evaluation of downstream accuracy: for example, we find that text-to-SQL is the domain in which weight corrections led to the most significant improvement in approximations of the global posterior, as well as the domain in which weight corrections most improve downstream performance. 
This suggests that the probabilistic formulation of the problem leads to practical gains in performance. Furthermore, these benefits can extend beyond our main performance metric; for instance, resampling during molecular generation yields simultaneous improvements along a number of additional dimensions of interest, including de-novo similarity and diversity (\cref{fig:qed_radar}).

\paragraph{Across problem instances, the best-performing methods assign probabilities that are more correlated with downstream performance} In each of our experiments, we group output particles by \emph{semantic equivalence}, and estimate the probability of each equivalence class under the method's approximation to the global product of experts, by summing the normalized weights of the members of each equivalence class  \citep[this is similar to the postprocessing performed in][]{shi2022natural}. We then measure the correlation between the estimated probability of a result and its score on the task-specific metric.\looseness=-1
\ \Cref{table:calibration} shows sequential Monte Carlo overall exhibits high correlation between (approximate) posterior probabilities and downstream performance, and that the differences in correlation between methods closely track the differences in performance in \cref{sec:downstream}. In the goal inference, molecular synthesis, and data science domains, where expensive potentials and resampling greatly increase performance, we find that the same features also result in higher correlation between result probability and performance, whereas in text-to-SQL, where the performance gains are slimmer, we find that all methods correlate and score equally well. 
Together, these results validate the probabilistic approach, suggesting that the global posterior captures semantically meaningful uncertainty.

\begin{table}[t]
\centering
\caption{Pearson correlation between relative particle weights and accuracy scores for all weighted methods. Greater correlation indicates that relative weights are more strongly associated with downstream performance.\looseness=-1
}
\rowcolors{2}{lightgray!20!white}{white}
      
    \resizebox{\textwidth}{!}{
     \begin{tabular}{m{0.45\linewidth} c c c c}
     \toprule
     \multirow{2}{*}{\textbf{Method}} & \multicolumn{4}{c}{\textbf{Correlation between relative weight and score}} \\
     \hiderowcolors
     \cmidrule(lr){2-5} 
     & \textbf{Goal inference} & \textbf{Molecular synthesis} & \textbf{Data science} & \textbf{Text-to-SQL} \\
     \showrowcolors
    \midrule

    LM with grammar constraints and weight correction \NEW{(Grammar-Only IS)}  & 0.138 (0.10, 0.18) & 0.218 (0.16, 0.28) & 0.217 (0.18, 0.26) & 0.810 (0.79, 0.83)\\ 
    LM with grammar constraints, potential, and weight correction \NEW{(Full IS)} & 0.677 (0.64, 0.71) & 0.570 (0.53, 0.61) & 0.289 (0.25, 0.33) & 0.796 (0.78, 0.81)\\ 
    LM with grammar constraints, potential, weight correction, and resampling \NEW{(Full SMC)} & \text{0.793} (0.76, 0.82) & \text{0.826} (0.81, 0.84) & 0.370 (0.31, 0.42) & 0.810 (0.79, 0.83)\\ 
    \bottomrule
     \end{tabular}
    }
\label{table:calibration}
\vspace{-.75\baselineskip}
\end{table}
\begin{figure}[t]
\caption{Distributional properties of compounds generated by different methods. \textbf{Middle:} Distribution of drug-likeness as measured by QED score \citep{bickerton2012quantifying}. \textbf{Right:} Means for other properties of interest such as diversity and de novo similarity (details in \cref{app:gbd_17}).}
\begin{center}
\includegraphics[width=.8\linewidth]{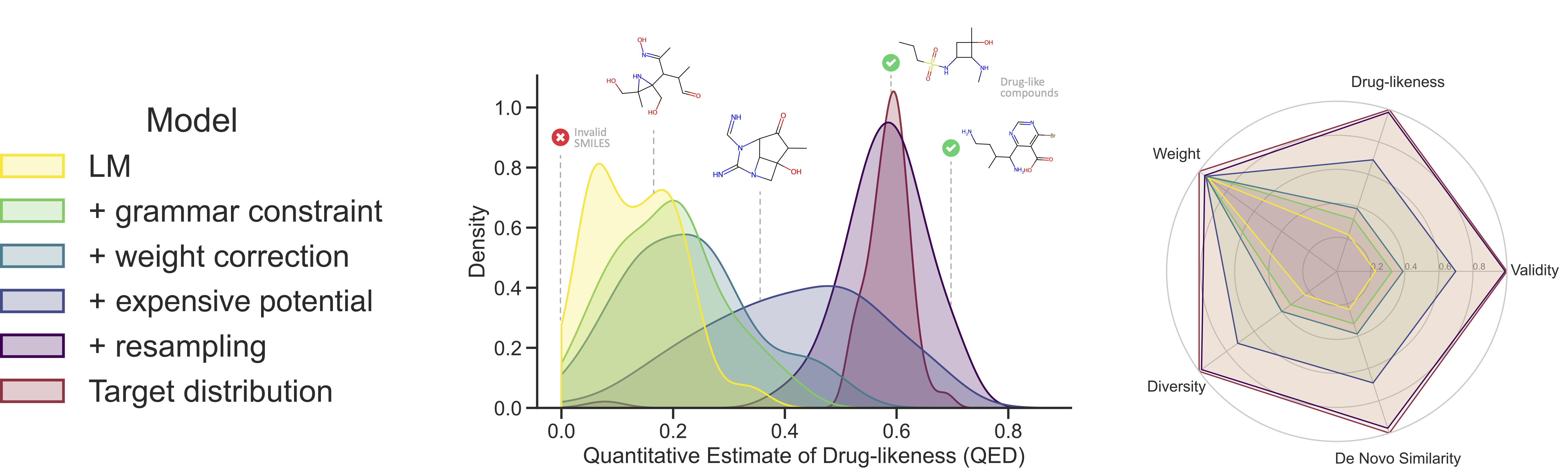}
\end{center}
\label{fig:qed_radar}
\vspace{-2\baselineskip}
\end{figure}

\vspace{-\baselineskip}
\section*{Conclusion}
\vspace{-.75\baselineskip}
This paper presents a principled formulation of constrained generation as sampling from a global product of experts distribution. To solve this sampling problem, we introduce a sequential Monte Carlo (SMC) algorithm that can flexibly incorporate different constraints, making incremental use of their signals. In a series of experiments, we show that this approach allows smaller models to outperform larger and fine-tuned models, that the incrementality of SMC makes it an order of magnitude more efficient than non-incremental approaches, and that downstream performance is linked to the quality of the posterior approximation, providing support for our probabilistic perspective.

\subsubsection*{Acknowledgments}
The authors would like to thank Manuel de Prada Corral, Brian DuSell, Joshua B. Tenenbaum, and Tan Zhi Xuan for valuable discussions, suggestions, and coding support that improved this work. The last author gratefully acknowledges the Canada CIFAR AI Chair program for support.

\subsubsection*{Author Contributions}

\newcommand{\researchConception}{research conception and development}
\newcommand{\writing}{writing}
\newcommand{\experimentDevelopment}{experiment development}
\newcommand{\leadSoftware}{lead software engineer}
\newcommand{\softwareDevelopment}{software development}
\newcommand{\softwarePrototype}{software development (prototype)}
\newcommand{\softwareTesting}{software development (testing and integration)}
\newcommand{\softwareGrammar}{software development (grammar interfaces, testing, and integration)}
\newcommand{\softwareAlgorithm}{software and algorithm development (context-free grammars)}
\newcommand{\vLLMIntegration}{software development (vLLM integration, testing)}
\newcommand{\molecularAnalysis}{analysis and presentation of molecular synthesis experiments}
\newcommand{\minorContributions}{minor contributions (testing, writing)}
\newcommand{\organizationManagement}{organization management}
\newcommand{\seniorProjectLeadership}{senior project leadership}
\newcommand{\teamLeadership}{overall team leadership and direction}
\newcommand{\projectMentorship}{project advising and mentorship}
\newcommand{\techAdvice}{technical advice}
\newcommand{\projectNarrative}{project narrative development}
\newcommand{\fullStackSoftware}{full-stack software contributor}
\newcommand{\softwareSystemDesign}{software system design and implementation}

\begin{itemize}
    \item[] \underline{\textbf{First Authors}}
    \item \textbf{João Loula} (\texttt{\href{mailto:jloula@mit.edu}{jloula@mit.edu}}): \researchConception, \writing, \experimentDevelopment, \softwarePrototype
    \item \textbf{Benjamin LeBrun} (\texttt{\href{mailto:benjamin.lebrun@mail.mcgill.ca}{benjamin.lebrun@mail.mcgill.ca}}): \researchConception, \leadSoftware, \experimentDevelopment, \writing
    \item \textbf{Li Du} (\texttt{\href{mailto:leodu@cs.jhu.edu}{leodu@cs.jhu.edu}}): \experimentDevelopment, software development (parser)
    \item[]
    \item[] \underline{\textbf{Contributors}}
    \item \textbf{Ben Lipkin} (\texttt{\href{mailto:lipkinb@mit.edu}{lipkinb@mit.edu}}): \softwareGrammar, \writing
    \item \textbf{Clemente Pasti} (\texttt{\href{mailto:clemente.pasti@inf.ethz.ch}{clemente.pasti@inf.ethz.ch}}): \softwareAlgorithm, \writing
    \item \textbf{Gabriel Grand} (\texttt{\href{mailto:grandg@mit.edu}{grandg@mit.edu}}): \softwareTesting, \molecularAnalysis
    \item \textbf{Tianyu Liu} (\texttt{\href{mailto:tianyu.liu@inf.ethz.ch}{tianyu.liu@inf.ethz.ch}}): \vLLMIntegration
    \item \textbf{Yahya Emara} (\texttt{\href{mailto:yemara@ethz.ch}{yemara@ethz.ch}}): \softwareTesting, \writing
    \item \textbf{Marjorie Freedman} (\texttt{\href{mailto:mrf@isi.edu}{mrf@isi.edu}}): \organizationManagement, \writing
    \item \textbf{Jason Eisner} (\texttt{\href{mailto:jason@cs.jhu.edu}{jason@cs.jhu.edu}}): \techAdvice, \writing, \projectMentorship
    \item \textbf{Ryan Cotterell} (\texttt{\href{mailto:ryan.cotterell@inf.ethz.ch}{ryan.cotterell@inf.ethz.ch}}): \organizationManagement, \seniorProjectLeadership, \researchConception, \writing
    \item[]
    \item[] \underline{\textbf{Senior Authors}}
    \item \textbf{Vikash Mansinghka} (\texttt{\href{mailto:vkm@mit.edu}{vkm@mit.edu}}): \organizationManagement, \researchConception, \projectMentorship
    \item \textbf{Alexander K. Lew} (\texttt{\href{mailto:alexander.lew@yale.edu}{alexander.lew@yale.edu}}): \seniorProjectLeadership, \researchConception, \projectNarrative, \writing, \softwarePrototype
    \item \textbf{Tim Vieira} (\texttt{\href{mailto:tim.f.vieira@gmail.com}{tim.f.vieira@gmail.com}}): \seniorProjectLeadership, \fullStackSoftware, \researchConception, \projectNarrative, \writing, \softwareSystemDesign
    \item \textbf{Timothy J. O'Donnell} (\texttt{\href{mailto:timothy.odonnell@mcgill.ca}{timothy.odonnell@mcgill.ca}}): \teamLeadership, \organizationManagement, \seniorProjectLeadership, \researchConception, \projectMentorship, \projectNarrative, \writing
\end{itemize}  

\bibliography{iclr2025_conference}
\bibliographystyle{iclr2025_conference}

\newpage

\appendix

\section{Additional Experiments}

\subsection{Smaller Base LMs}

\NEW{This section evaluates downstream accuracy across methods using smaller base language models (relative to \cref{table:acc} in the main text, reproduced in Appendix \cref{table:acc-appendix} for easier comparison). For the Text-to-SQL, Molecular Synthesis, and Goal Inference domains, which in the \cref{sec:downstream} experiments used Llama 3.1 (8B), we substitute Llama 3.2 (1B). In the Data Science domain, which used Llama 3 (70B) in the \cref{sec:downstream} experiments, we substitute Llama 3.1 (8B). All experiments were run with $N = 10$ particles, and the instruct version of Llama 3.2 (1B) was used in the text-to-SQL domain to remain consistent with the model variants used in the main paper.}

\NEW{We report posterior-weighted accuracy using the smaller LMs across all methods and domains in \cref{tab:acc-small-lm}. Although accuracy is significantly lower compared to the larger LMs, we find that weight corrections, expensive potentials, and resampling steps still improve model performance. We also find that, in general, the relative gains in accuracy provided by our method are more pronounced for smaller language models. For easier comparison, \cref{table:acc-appendix} presents an identical version of \cref{table:acc}, showing the results for the larger base LMs which were reported in \cref{sec:downstream}. With the exception of Text-to-SQL, we observe that our approach with the smaller LM outperforms the locally constrained decoding baseline (LM w/ grammar constraint) using the larger LM. In the Data Science domain, our Full SMC approach with the smaller LM outperforms the larger base LM. These results suggest that our approach can dramatically improve the performance of smaller LMs.
}

\begin{table}
\caption{Downstream accuracy of different methods with a \textbf{smaller} base language model (Llama 3.1 8B in Data science and Llama 3.2 1B in all other domains). Errors are bootstrapped 95\% confidence intervals. Instruct model is used for Text-to-SQL.}
\rowcolors{2}{lightgray!20!white}{white}
\label{tab:acc-small-lm}
\resizebox{\textwidth}{!}{
\begin{tabular}{lcccc}
\toprule
\multirow{2}{*}{\textbf{Method}} & \multicolumn{4}{c}{\textbf{Score}} \\
\hiderowcolors
\cmidrule(lr){2-5}
 &  \textbf{Goal inference} & \textbf{Molecular synthesis} & \textbf{Data science} & \textbf{Text-to-SQL} \\
 \showrowcolors
\midrule LM & 0.012 (0.01, 0.02) & 0.032 (0.02, 0.04) & 0.114 (0.09, 0.14) & 0.224 (0.207, 0.241) \\ 
 \emph{w/ grammar constraint} (Locally constrained Decoding) & 0.046 (0.03, 0.06) & 0.031 (0.02, 0.04)  & - & 0.250 (0.232, 0.270) \\
 \emph{w/ grammar, weight correction} (Grammar-only IS) & 0.037 (0.02, 0.06) & 0.041 (0.03, 0.05) & - & 0.301 (0.281, 0.323)\\
 \emph{w/ grammar, potential} (Sample-Rerank) & 0.087 (0.06, 0.12) & 0.119 (0.09, 0.16) & - &  0.299 (0.278, 0.321) \\
 \emph{w/ grammar, correction, and resampling} (Grammar-only SMC) & 0.052 (0.03, 0.08) & 0.050 (0.04, 0.06) & - &  0.302 (0.281, 0.324) \\
 \emph{w/ grammar, potential, and correction} (IS) & 0.079 (0.05, 0.11) & 0.122 (0.09, 0.16) & 0.225 (0.19, 0.26) & \textbf{0.348} (0.326, 0.372) \\
 \emph{w/ grammar, potential, correction, and resampling} (SMC) & \textbf{0.125} (0.09, 0.16) & \textbf{0.517} (0.48, 0.55) & \textbf{0.285} (0.24, 0.34) & \textbf{0.348} (0.325, 0.374)\\
\bottomrule
\end{tabular}
}
\end{table}

\begin{table}[t]
\centering
\caption{Downstream accuracy of different methods with a \textbf{larger} (relative to \cref{tab:acc-small-lm}) base language models that were used in the main experiments (Llama 3.1 70B in Data science and Llama 3.1 8B in all other domains). Errors are bootstrapped 95\% confidence intervals.  Instruct model is used for Text-to-SQL. This table is identical to \cref{table:acc} in the main text and is repeated in the appendix for easier comparison. \looseness=-1}
\rowcolors{2}{lightgray!20!white}{white}
\label{table:acc-appendix}

\resizebox{\textwidth}{!}{
\begin{tabular}{l cccc}
\toprule
\multirow{2}{*}{\textbf{Method}} & \multicolumn{4}{c}{\textbf{Score}} \\ 
\hiderowcolors
\cmidrule(lr){2-5}
 &  \textbf{Goal inference} & \textbf{Molecular synthesis} & \textbf{Data science} & \textbf{Text-to-SQL} \\ 
 \showrowcolors
\midrule
LM &  0.063 (0.05, 0.08) & 0.132 (0.12, 0.15) & 0.213 (0.19, 0.24) & 0.531 (0.51, 0.55) \\ 
 \emph{w/ grammar constraint} \NEW{(Locally constrained Decoding)} & 0.086 (0.07, 0.11) & 0.189 (0.17, 0.21) & - & 0.559 (0.54, 0.58)\\
 \emph{w/ grammar, weight correction} \NEW{(Grammar-only IS)} & 0.083 (0.06, 0.11) & 0.228 (0.21, 0.25) & - & 0.597 (0.57, 0.62)\\
 \emph{w/ grammar, potential} \NEW{(Sample-Rerank)} & 0.289 (0.24, 0.34) & 0.392 (0.36, 0.42) & - & 0.581 (0.56, 0.60) \\
\NEW{\emph{w/ grammar, correction, and resampling} (Grammar-only SMC)} & \NEW{0.401 (0.34, 0.46)} & \NEW{0.205 (0.18, 0.23)} & - & \NEW{0.596 (0.57, 0.62)} \\
 \emph{w/ grammar, potential, and correction} \NEW{(Full IS)} & 0.257 (0.21, 0.31) & 0.404 (0.37, 0.44) & 0.346 (0.31, 0.39) & \textbf{0.618} (0.59, 0.64) \\
 \emph{w/ grammar, potential, correction, and resampling} \NEW{(Full SMC)} & \textbf{0.419} (0.37, 0.48) & \textbf{0.577} (0.56, 0.59) & \textbf{0.407} (0.36, 0.45) & \textbf{0.620} (0.60, 0.64) \\
\bottomrule
\end{tabular}
}

\end{table}

\subsection{Accuracy by Number of Particles}\label{sec:acc-n-particles}

\NEW{This section investigates how performance improvements vary with the number of particles. \cref{tab:acc-n-particles} reports downstream accuracy for $N=5$, $N=10$, and $N=50$ particles using the Llama 3.1 (8B) models. Note that we only include methods in which samples are generated from an approximation that is constructed from a set of importance-weighted particles. For the base LM and locally constrained decoding baselines, samples are generated through direct ancestral sampling. As a result, the number of particles does not influence accuracy in these cases (though additional particles can provide a better estimate of the true model accuracy), so we omit these methods from the analysis. 

The main effect we observe is the more efficient use of computational resources by Full SMC compared to methods that do not incorporate incremental information, such as Full IS: the former outperforms the latter with one tenth of the particles in 3 out of 4 domains (Data Science, Molecular Sythesis, Text-to-SQL) and one fifth of the particles in the other domain (Goal Inference, see \cref{fig:particles_lms} in the main text for a visualization). We note an additional patterns of results: in the Text-to-SQL and Molecular synthesis domains, increasing the number of particles has a marginal impact on downstream accuracy; however, in Goal inference and Data Science, we observe that a greater number of particles can lead to significantly better downstream accuracy (though only when increasing from $5$ to $10$ particles in the Data Science domain). Given that Goal Inference and Data Science are the two tasks where our expensive potentials are most informative, this pattern of results seems to be reflective of the fact that richer potentials require more computation to fully exploit. 
}

\begin{table}
\caption{Accuracy by number of particles across methods. Errors are bootstrapped 95\% confidence intervals. Llama 3.1 8B is used as the base LM for all domains. Instruct model is used for Text-to-SQL.}
\rowcolors{2}{white}{lightgray!20!white}
\label{tab:acc-n-particles}
\resizebox{\textwidth}{!}{
\begin{tabular}{lcccc}
\hiderowcolors
\toprule
\multirow{2}{*}{\textbf{Method}} & \multicolumn{4}{c}{\textbf{Score}} \\
\cmidrule(lr){2-5}
 &  \textbf{Goal inference} & \textbf{Molecular synthesis} & \textbf{Data science} & \textbf{Text-to-SQL} \\ 
\midrule
& \multicolumn{4}{c}{\textbf{5 Particles}} \\
\showrowcolors
\midrule
  \emph{LM w/ grammar constraint, correction} (Grammar-only IS)  & 0.106 (0.08, 0.14) & 0.239 (0.21, 0.27) & - &  0.587 (0.56, 0.61)\\
   \emph{LM w/ grammar constraint, potential} (Sample-Rerank) & 0.214 (0.17, 0.26) & 0.407 (0.36, 0.45) & - &   0.578 (0.55, 0.60)\\
 \emph{LM w/ grammar constraint, correction, and resampling} (Grammar-only SMC)  & 0.310 (0.26, 0.37) & 0.209 (0.18, 0.24) & - &  0.599 (0.57, 0.62) \\
 \emph{LM w/ grammar constraint, potential, and correction} (Full IS) & 0.216 (0.17, 0.27) & 0.411 (0.37, 0.45) & 0.204 (0.16, 0.25) & \textbf{0.611} (0.59, 0.63)\\
 \emph{LM w/ grammar constraint, potential, correction, and resampling} (Full SMC) & \textbf{0.319} (0.27, 0.37) &  \textbf{0.552} (0.52, 0.58) & \textbf{0.224} (0.18, 0.27) & \textbf{0.620} (0.59, 0.64)\\
 \hiderowcolors
\midrule
& \multicolumn{4}{c}{\textbf{10 Particles}} \\
\showrowcolors
\midrule
 \emph{LM w/ grammar constraint, weight correction} (Grammar-only IS) & 0.083 (0.06, 0.11) & 0.228 (0.21, 0.25) & - & 0.597 (0.57, 0.62)\\
 \emph{LM w/ grammar constraint, potential} (Sample-Rerank) & 0.289 (0.24, 0.34) & 0.392 (0.36, 0.42) & - & 0.581 (0.56, 0.60) \\
\emph{LM w/ grammar constraint, correction, and resampling} (Grammar-only SMC) & 0.401 (0.34, 0.46) & 0.205 (0.18, 0.23) & - & 0.596 (0.57, 0.62) \\
 \emph{LM w/ grammar constraint, potential, and correction} (Full IS) & 0.257 (0.21, 0.31) & 0.404 (0.37, 0.44) & 0.223 (0.19, 0.27) & \textbf{0.618} (0.59, 0.64) \\
 \emph{LM w/ grammar constraint, potential, correction, and resampling} (Full SMC) & \textbf{0.419} (0.37, 0.48) & \textbf{0.577} (0.56, 0.59) & \textbf{0.285} (0.26, 0.32) & \textbf{0.620} (0.60, 0.64) \\
 \hiderowcolors
 \midrule
& \multicolumn{4}{c}{\textbf{50 Particles}} \\
\showrowcolors
\midrule
  \emph{LM w/ grammar constraint, correction} (Grammar-only IS)  & 0.069 (0.05, 0.09) & 0.211 (0.20, 0.22) & - &  0.603 (0.58, 0.63) \\
\emph{LM w/ grammar constraint, potential} (Sample-Rerank)  & 0.416 (0.36, 0.47) & 0.382 (0.37, 0.40) & - &  0.585 (0.56, 0.61) \\
 \emph{LM w/ grammar constraint, correction, and resampling} (Grammar-only SMC)  & 0.595 (0.54, 0.65) & 0.212 (0.20, 0.23) & - &  0.599 (0.58, 0.62) \\
 \emph{LM w/ grammar constraint, potential, and correction} (Full IS) & 0.393 (0.35, 0.45) & 0.389 (0.38, 0.40) & {0.218} (0.19, 0.25) & \textbf{0.626} (0.60, 0.66) \\
 \emph{LM w/ grammar constraint, potential, correction, and resampling} (Full SMC) & \textbf{0.611} (0.56, 0.66) & \textbf{0.569} (0.56, 0.58) & \textbf{0.292} (0.25, 0.33) & \textbf{0.622} (0.60, 0.65)\\
\bottomrule
\end{tabular}
}
\end{table}

\subsection{Resampling Without Replacement \citep{lew2023sequential}}

\begin{table}
\caption{Downstream accuracy comparison with the SMC Steering method from \citet{lew2023sequential} in the text-to-SQL domain. Errors are bootstrapped 95\% confidence intervals. Both methods include expensive potentials. Our method is run with $10$ particles. SMC Steering is run with $5$ particles and a beam size of $3$. Both methods are run with Llama 3.1 8B Instruct.}
\label{tab:smc-steer}
\small
\begin{center}
    \begin{tabular}{lc}
\toprule
\textbf{Method} & \textbf{Score}\\
\midrule
Full SMC  & \textbf{0.620} (0.60, 0.64)\\
SMC Steering \citep{lew2023sequential} & 0.607 (0.58, 0.63)\\
\bottomrule
\end{tabular}
\end{center}

\end{table}

\NEW{This section evaluates our approach using the without-replacement resampling method introduced in  \citet{lew2023sequential}. Specifically, we use our Full SMC algorithm with expensive potential (LM w/ grammar constraint, potential, correction, and resampling), and replace multinomial resampling steps with \citet{lew2023sequential}'s without replacement scheme. For comparison, we ran the without replacement baseline (SMC Steering) with $N=5$ particles and a beam size of $3$, alongside our approach using multinomial resampling with $N=10$ particles (and an ESS threshold of $0.9$). These settings effectively give the SMC Steering method a particle count of $N=15$, giving it an advantage in the comparison. 

\cref{tab:smc-steer} reports weighted accuracy for these methods in the text-to-SQL domain (we restricted this analysis to a single domain because of limitations in computational resources). We observe that without-replacement resampling steps slightly hurt performance compared to multinomial resampling.}

\subsection{Computational Cost}

Though we have shown that practitioners can improve over locally constrained decoding by using our proposed SMC method, in practice, there is additional computational cost stemming from two sources: resampling and computing expensive potentials $\PotsSlow$. The cost of resampling is negligible, consisting only of simple sum, softmax, and categorical sampling operations at every token. The cost of computing expensive potentials, on the other hand, is more significant and varies across domains. \Cref{tab:computational-cost} shows the average per token cost of computing expensive potentials for all of our domains: we see that it rarely goes above about 30ms.

In general, the computational cost of expensive potentials is lessened by two factors: 1) expensive potentials often change not at every token, but only at larger, semantically meaningful units (for instance, the end of a SQL clause or a Python statement)---caching can therefore significantly lessen computational cost, 2) expensive potentials are often CPU rather than GPU computations (and so the cost of computation is much cheaper).

\begin{table}
\caption{Average per token cost (in seconds) of computing the expensive potential $\PotsSlow$ for each of our domains. Intervals are bootstrapped confidences estimated by selecting 10 SMC generations at random for each domain.}
\label{tab:computational-cost}
\resizebox{\textwidth}{!}{
\begin{tabular}{lcccc}
\toprule
\textbf{Method} & \textbf{Goal Inference} & \textbf{Molecular Synthesis} & \textbf{Data Science} & \textbf{Text-to-SQL}\\
\midrule
$\PotsSlow$ seconds per token  & 0.011 (0.007, 0.016) & 0.0003 (0.0002, 0.0004) & 0.007 (0.0009, 0.023) & 0.031 (0.0204, 0.0413)\\
\bottomrule
\end{tabular}
}
\end{table}

\clearpage
\section{Grammar-Based Potentials and Parsing Algorithms}\label{sec:parser}

\newcommand{\grammar}{\mathcal{G}}
\newcommand{\kleene}[1]{#1^{*}}
\newcommand{\potCFG}[0]{{\color{GCD}\phi_{\mathrm{\grammar}}}}
\newcommand{\terminals}[0]{\mymacro{\Sigma}}
\newcommand{\clementesuggest}[2]{{ \color{blue} #1}}

We briefly describe how we implemented the parser, i.e., grammar-based potential functions, used in our experiments (\cref{sec:experiments}).  These potentials are derived from a \emph{context-free grammar} \citep[CFG; see, e.g.,][]{hopcroft-1979}, a rule-based formalism for assigning non-negative values to strings.  In the setting of our experiments, these are $\{0,1\}$-valued judgments of syntactic validity, e.g., whether or not a string is a well-formed SQL query. We note, however, that context-free grammars can be easily generalized to the more general case of assigning nonnegative scores to these sequences \citep{goodman-1999-semiring}, not just boolean values, even though we do not leverage this flexibility in our experiments.

A CFG $\grammar$ defines a boolean-valued function $\grammar\colon \kleene{\terminals}\eos \rightarrow \{0,1\}$, where $\terminals$ is an alphabet of terminal symbols. We call $L(\grammar)=\mathrm{supp}(\grammar)$ the \emph{language} of $\grammar$, where $\mathrm{supp}(\grammar)$ denotes the support of $\grammar$. Intuitively, we can define a \emph{grammar-based potential} $\potCFG$ as a function that evaluates whether a string $\yy$ is a prefix of some string in $L(\grammar)$. Formally we write this as $\potCFG(\yy)=\mathbbm{1}\{ \exists \yy' \in \kleene{\terminals} \eos \mid \yy \preceq \yy', \yy' \in L(\grammar)\}$, for every $\yy \in \kleene{\terminals} \cup \kleene{\terminals} \eos$, where $\yy \preceq \yy'$ means that $\yy$ is a prefix of $\yy'$. In this work, we assume that the terminal symbols are characters---or, more precisely, bytes\footnote{This may be achieved by adding additional rules to the CFG that expand each of its terminal symbols into its constituent byte sequence. }---so that they may be aligned to the token vocabulary $\A$ used by the language model $\plm$.\footnote{An alternative approach is to transform a byte-level CFG into a token-level CFG; however, this can make the grammar extremely large.}  This approach assumes that each token $x \in \A$ is equivalent to a sequence of bytes.\footnote{In practice, this may require a lookup table to convert each token identifier to its representation as a byte string.}

We implemented a specialized algorithm\footnote{Our algorithm is based on \citet{earley.j:1968,stolcke.a:1995,nowak-cotterell-2023-fast,opedal-etal-2023-efficient}.} to efficiently infer grammar-based potentials $\potCFG(\yy\,y')$ for all terminals $y' \in \terminals \cup \{ \eos \}$ (simultaneously) given a prefix $\yy \in \terminals^*$.
Note that in principle, we could transform the potential function to have the type $(\omnicomplete) \to \{0,1\}$ by scoring each token $y \in  \Aeos$ based on its decoded byte sequence; however, the number of tokens is very large, making too expensive to evaluate the potential for each token's byte sequence, in practice.  In \cref{sec:char_proposal}, we describe an efficient strategy for (approximately) sampling tokens the locally constrained proposal distribution (\cref{eq:localZ}), which comes with some useful probabilistic guarantees.

\clearpage
\section{The Set-Based Proposal Speedup}
\label{sec:char-proposal}

This section describes a {\color{SET}set-based proposal speedup} (\cref{sec:set-based-proposal-speedup-framework}).  We instantiate the speedup in \cref{sec:char_proposal} for the special case of character-level potential functions, such as parsing.

\subsection{Framework}
\label{sec:set-based-proposal-speedup-framework}

The \textit{reweight} step of our sequential Monte Carlo algorithm requires us to evaluate $\localZFast{\xx}$, and the \textit{extend} step requires us to sample from the locally constrained distribution $\lpoeFast(\cdot \mid \xx)$. Both of these operations can be moderately expensive because they require the evaluation of $\PotsFast(\xx\, x')$ for all tokens $x' \in \Aeos$. In this section, we describe a general scheme for speeding up both steps by \textit{approximately} sampling $\lpoeFast(\cdot \mid \xx)$ and \textit{approximately} evaluating $\localZFast{\xx}$. If we are careful about how we carry out this approximation, it will not change the intermediate targets $\target{t}(\xx)$, defined \cref{eq:intermediate-targets}, that control the behavior of the sequential Monte Carlo algorithm.

\paragraph{Sequential Monte Carlo with approximate \textit{Extend} and \textit{Reweight} steps.} The key invariant maintained by sequential Monte Carlo is that at each step, the distribution of each particle $(\xx_t^{(i)}, \w^{(i)}_t)$ is \textit{properly weighted} for the intermediate target $\target{t}$.

\begin{definition}
Let $\tilde{p}(x) = Z_p \cdot p(x)$ be an unnormalized distribution on a space $X$, with normalizing constant $Z_p$. Let $q$ be a probability distribution on \textit{weighted pairs} $(x, w) \in X \times \mathbb{R}_{\geq 0}$. We say that $q$ is \textit{properly weighted} for $\tilde{p}$ if, for any function $f$,
\begin{equation}
\E_{(x, w) \sim q}[w\cdot f(x)] = Z_p \E_{x \sim p}[f(x)]
\end{equation}
\end{definition}

 The \textit{extend} and \textit{reweight} steps of the algorithm, which update a previous particle $(\xx_{<t}^{(i)}, \w^{(i)}_{t-1})$ by sampling $x_t^{(i)} \sim \lpoeFast(\xx_{<t}^{(i)})$ and returning the new particle
$$\left(\xx_{<t}^{(i)}\, x_t^{(i)}, \w^{(i)}_{t-1} \cdot 
\localZFast{\xx^{(i)}_{<t}}
\cdot 
\ConditionalEvalPotsSlow{x_t^{(i)}}{\xx^{(i)}_{<t}}\right)$$
are justified by the fact that if $(\xx_{<t}, \w^{(i)}_{t-1})$ is properly weighted for $\target{t-1}$, then the new pair is properly weighted for $\target{t}$. 
We are looking to improve runtime performance without compromising soundness, so we seek ways of modifying the \textit{extend} and \textit{reweight} steps that do not break this invariant.

In particular, suppose that instead of sampling $x^{(i)}_t \sim \lpoeFast(\cdot \mid \xx_{<t}^{(i)})$ and computing the \textit{exact} weight update $\w^{(i)}_t \defeq \w^{(i)}_{t-1} \cdot 
\localZFast{\xx^{(i)}_{<t}}
\cdot 
\ConditionalEvalPotsSlow{x_t^{(i)}}{\xx^{(i)}_{<t}}$, we instead generate $(\fastSample, \fastTotalWeight{})$ from a proposal $q$ that is \textit{properly weighted} for the unnormalized local product of experts $\localTargetNum(\fastSample) \defeq \localZFast{\xx^{(i)}_{<t}} \cdot \lpoeFast(\fastSample \mid \xx_{<t}^{(i)})$, then compute the alternative weight update 
\begin{equation}
{\color{SET} w}^{(i)}_t \defeq \w^{(i)}_{t-1} \cdot 
\fastTotalWeight{}
\cdot 
\ConditionalEvalPotsSlow{\fastSample}{\xx^{(i)}_{<t}}
\end{equation}
The key observation is that, by the fact that $q$ is properly weighted for $\localTargetNum$, we know that for every possible previous particle $(\xx_{<t}^{(i)}, \w_{t-1}^{(i)})$ and every function $f$,
\begin{align}
    \E_{(\fastSample, \fastTotalWeight{}) \sim q}[{\color{SET} w}^{(i)}_t \cdot f(\xx_{<t}^{(i)}\,\fastSample)] &= \E_{(\fastSample, \fastTotalWeight{}) \sim q}\left[ \w^{(i)}_{t-1} \cdot 
\fastTotalWeight{}
\cdot 
\ConditionalEvalPotsSlow{\fastSample}{\xx^{(i)}_{<t}} \cdot f(\xx_{<t}^{(i)}\,\fastSample)\right]\\
    &= \localZFast{\xx^{(i)}_{<t}}  \E_{x_t^{(i)} \sim \lpoeFast(\cdot \mid \xx_{<t}^{(i)})}\left[\w^{(i)}_{t-1} \ConditionalEvalPotsSlow{x^{(i)}_t}{\xx^{(i)}_{<t}} f(\xx_{<t}^{(i)}\,x_t^{(i)})\right] \\
    &= \E_{x_t^{(i)} \sim \lpoeFast(\cdot \mid \xx_{<t}^{(i)})}\left[\w^{(i)}_t \cdot f(\xx_{<t}^{(i)}\,x_t^{(i)})\right]
\end{align}
Therefore, if the overall proper weighting invariant (with respect to the intermediate target $\target{t}$) holds for the original update, then it will also hold for this modified \textit{extend}-and-\textit{reweight} procedure. For more on SMC with estimated weights, see~\citet{chopin2020introduction} and~\citet{lew2022recursive}.

\paragraph{A family of properly weighted updates based on the Horvitz--Thompson estimator.} 
We now introduce a useful family of properly weighted proposals for our setting. It will allow us to generate weighted next-token proposals $(\fastSample, \fastTotalWeight{})$ while \textit{only} evaluating the potentials $\PotsFast$ on a (randomly chosen) subset $\fastS \subseteq \Aeos$. Our procedure is inspired by the Horvitz--Thompson estimator~\citep{horvitz1952generalization}.

First, to reduce notational clutter, we define the following shorthand: $\localTargetZ \defeq \localZFast{\xx}$, and $\localTarget(x') = \localTargetNum(x')/\localTargetZ =\lpoeFast(x' \mid \xx)$.

\begin{definition}
\label{def:set-based-proposal-speedup}
Given a probability distribution $q$ over subsets of $\Aeos$,
we define the {\color{SET}\defn{set-based proposal speedup}} by the following generation procedure: 

\begin{enumerate}[noitemsep,topsep=0pt,parsep=0pt,partopsep=0pt,leftmargin=*]

\item Sample a subset $\fastS \sim q$ where $q$ is a probability distribution over subsets of $\Aeos$.

\item Compute the \emph{local weight} $\fastWeight(x) \defeq \frac{\localTargetNum(x)}{\incl(x)}$ of each token $x \in \fastS$ where $\incl_q(x)$ is the \defn{inclusion probability} $\incl_q(x) \defeq \underset{S' \sim q}{\Pr}\left[ x \in S' \right] = \sum_{S'} q(S') \indicator{ x \in S' }$, i.e., the probability that $x$ ends up in \emph{any} sampled $S' \sim q$.\footnote{We require $q$ to be such that every token has a positive inclusion probability $\incl_q(x) > 0$ for all $x \in \Aeos$.}

\item Compute the set-conditioned distribution $q(\fastSample \mid \fastS) \defeq \frac{\fastWeight(\fastSample)\,\indicator{ \fastSample \in \fastS }}{\fastTotalWeight{\fastS}}$ where $\fastTotalWeight{\fastS} = \sum_{x \in \fastS} \fastWeight(x)$.

\item Sample $\fastSample \sim q(\cdot \mid \fastS)$.

\item Return $(\fastSample, \fastTotalWeight{\fastS})$
\end{enumerate}
\end{definition}

\noindent Then, as described above, we modify the SMC algorithm to use the sampled token $\fastSample$ instead of $x \sim \lpoeFast(\cdot \mid \xx)$ in the \emph{extend} step, and the weight $\fastTotalWeight{\fastS}$ instead of $\localZFast{\xx}$ in the \emph{reweight} step.

We justify this approach by showing that it produces properly weighted samples.

\begin{proposition}
\label{prop:set-speedup-properly-weighted}
The set-based proposal speedup's distribution $q$ (\cref{def:set-based-proposal-speedup}) is a properly weighted proposal for $\localTargetNum$.
\end{proposition}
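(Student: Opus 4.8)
The plan is to verify the proper-weighting identity from the definition of proper weighting directly from the generative procedure of \cref{def:set-based-proposal-speedup}, exploiting the same cancellation that makes the Horvitz--Thompson estimator unbiased. Writing $q$ also for the law of the weighted pair $(\fastSample, \fastTotalWeight{\fastS})$ returned by Steps 1--5, I want to show that for every $f \colon \Aeos \to \mathbb{R}$,
\[
\E_{(\fastSample, \fastTotalWeight{}) \sim q}\left[\fastTotalWeight{} \cdot f(\fastSample)\right] \;=\; \sum_{x \in \Aeos} \localTargetNum(x)\, f(x) \;=\; \localTargetZ \cdot \E_{\fastSample \sim \localTarget}\left[f(\fastSample)\right],
\]
which is exactly the assertion that $q$ is properly weighted for $\localTargetNum = \localTargetZ \cdot \localTarget$.

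First I would condition on the sampled set $\fastS \sim q$ and evaluate the inner expectation over $\fastSample \sim q(\cdot \mid \fastS)$. Since the weight returned in Step 5 is the constant $\fastTotalWeight{\fastS}$ (independent of $\fastSample$) and $q(\fastSample \mid \fastS) = \fastWeight(\fastSample)\indicator{\fastSample \in \fastS}/\fastTotalWeight{\fastS}$, the two copies of $\fastTotalWeight{\fastS}$ cancel and the inner expectation collapses to $\sum_{x \in \fastS}\fastWeight(x) f(x)$. Rewriting this as $\sum_{x \in \Aeos}\indicator{x \in \fastS}\,\fastWeight(x) f(x)$, taking the outer expectation over $\fastS \sim q$, and interchanging the finite sum over $\Aeos$ with the expectation (legitimate because $\Aeos$ is finite), each term picks up a factor $\E_{\fastS \sim q}[\indicator{x \in \fastS}] = \incl_q(x)$. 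Substituting $\fastWeight(x) = \localTargetNum(x)/\incl_q(x)$ and using the standing assumption $\incl_q(x) > 0$ for every token, the inclusion probabilities cancel and the $x$-term is precisely $\localTargetNum(x) f(x)$; summing over $x$ yields the displayed identity, and the last equality is just $\localTargetNum = \localTargetZ \cdot \localTarget$.

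The only delicate point — and the closest thing to an obstacle — is the degenerate event that the sampled $\fastS$ contains no token with $\localTargetNum(x) > 0$, so that $\fastTotalWeight{\fastS} = 0$ and the set-conditioned law in Step 3 is a $0/0$ expression. I would dispatch this with the convention that on such sets $\fastSample$ is drawn from an arbitrary distribution supported on $\fastS$: the weight returned is then $0$, so these pairs contribute nothing to $\E[\fastTotalWeight{} f(\fastSample)]$, and the computation above is unaffected since every surviving term already has $\fastWeight(x) f(x) = 0$. For completeness I would also observe that $\fastTotalWeight{\fastS} = \sum_{x \in \fastS}\fastWeight(x) \geq 0$ always, since $\localTargetNum(x) = \localZFast{\xx}\,\lpoeFast(x \mid \xx) \geq 0$ and $\incl_q(x) > 0$, so the proposal indeed returns non-negative weights as the definition of proper weighting requires.
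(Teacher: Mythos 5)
Your proof is correct and follows essentially the same route as the paper's: both rest on the Horvitz--Thompson cancellation $\fastWeight(x)\,\incl_q(x) = \localTargetNum(x)$ after exchanging the sums over $\fastSample$ and $\fastS$, the only cosmetic difference being that you condition on $\fastS$ first while the paper sums over $\fastSample$ first. Your explicit handling of the degenerate event $\fastTotalWeight{\fastS}=0$ is a small point of additional care that the paper leaves implicit, but it does not change the argument.
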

\begin{proof}
Let $f$ be an arbitrary real-valued function.
\begin{subequations}
\begin{align}
\E_{(\fastSample,\fastTotalWeight{\fastS}) \sim q}[f(\fastSample) \fastTotalWeight{\fastS}]
&= \sum_{\fastSample} f(\fastSample) \sum_{\fastS} q(\fastSample \mid \fastS) q(\fastS) \fastTotalWeight{\fastS} \\
&= \sum_{\fastSample} f(\fastSample) \sum_{\fastS} \frac{\fastWeight(\fastSample) \, \indicator{\fastSample \in \fastS } }{\fastTotalWeight{\fastS}} q(\fastS) \fastTotalWeight{\fastS} \\
&= \sum_{\fastSample} f(\fastSample) \fastWeight(\fastSample) \sum_{\fastS} \indicator{ \fastSample \in \fastS } q(\fastS) \\
&= \sum_{\fastSample} f(\fastSample) \frac{\localTargetNum(\fastSample)}{ \incl_q(\fastSample) } \incl_q(\fastSample) \\
&= \localTargetZ \sum_{\fastSample} f(\fastSample) \localTarget(\fastSample) \\
&= \localTargetZ \E_{\fastSample \sim \localTarget}[f(\fastSample)]
\end{align}
\end{subequations}
\end{proof}

\subsection{Character-Based Proposal}
\label{sec:char_proposal}

Our character-based proposal distribution is an instance of the framework of the previous section.  In particular, $q$ samples sets of tokens $S$ by sampling a sequence of characters. We provide the pseudocode for this algorithm in \cref{alg:character-proposal}, and define two key data structures used by this proposal:

\begin{definition}
Our \defn{trie data structure} $T$ is a labeled, tree-structured graph that is defined as follows:
\begin{itemize}[leftmargin=*,topsep=3pt,itemsep=1pt,parsep=1pt]
\item Let $\Aeos$ be the LM's vocabulary of tokens where we represent each token as its strings of characters ending with a designated end-of-token marker \eot.\footnote{Note that $\eos$ is handled specially as it is not a string of characters.} Let $\Sigma$ denote the set of characters (or bytes).

\item Let $P$ be the prefix closure of the set $\Aeos$: $P \defeq \Set{\prefix \in \Sigma^* \,\middle|\, \prefix \preceq x, x \in \Aeos }$ where $\prefix \preceq x$ denotes that $\prefix$ is a prefix of $x$.

\item Let $T = (N, E)$ be a labeled graph with node P and labeled edges $E = \Set{ \prefix \xrightarrow{a} \prefix \, a \,\middle|\, \prefix, (\prefix\, a) \in P }$
\end{itemize}

\end{definition}

\begin{definition}
Let \mass be a mapping $N \to [0,1]$, defined as follows. 
\begin{subequations}
\label{eq:mass}
\begin{align}
\mass(x') &= \plm(x' \mid \context),\qquad\text{for }x' \in \Aeos
\\
\mass(\prefix) 
&= \smashoperator{\sum_{\prefix \xrightarrow{a} \prefix\, a \in E}} \mass(\prefix\, a),\qquad\text{for } \prefix \in P \setminus \Aeos
\end{align}
\end{subequations}
    
\end{definition}

\begin{algorithm}[H]
\begin{algorithmic}[1]
\Procedure{\CharProposal}{$\context$}

\State $\mass \gets$ Apply \cref{eq:mass} to $\plm(\cdot \mid \context)$

\State $\prefix \gets \varepsilon$   \Comment{start at the trie's root node}

\State $c \gets 1$ \Comment{path weight under cfg}

\State $\weights \gets \Set{}$

\State $\fastS \gets \emptyset$

\State $\incl \gets \Set{}$

\State $\incl(\varepsilon) \gets 1$ 

\While{true}

\State $p_1 \gets \Set{a\colon \frac{\mass(\prefix\,a) }{ \mass(\prefix) }
\text{ for } \prefix \xrightarrow{a} \prefix\,a \in E}$
\State $p_2 \gets \Set{a\colon \frac{\potCFG(\context \, \prefix\,a)}{\potCFG(\context \, \prefix)} \text{ for } a \in \Sigma}$

\If{$\prefix \xrightarrow{\eot} \prefix\, \eot \in E$}  \Comment{End-of-token available (i.e., $\prefix \in \Aeos$)}
\State $\weights(\prefix) = \frac{\mass(\prefix \, \eot) \cdot c}{ \incl(\prefix) }$
\State $\fastS \gets \fastS \cup \{ \prefix \}$

\EndIf

\State $\overline{q} \gets \Set{a: p_1(a) \cdot p_2(a) \text{ for } a \in \Sigma}$ 
\Comment{Note: $\eot \not\in \Sigma$.}

\State $Q \gets \sum_{a \in \Sigma} \overline{q}(a)$

\If{$Q = 0$}   \Comment{cannot continue further}
    \State \textbf{break}
\EndIf

\State $a \sim \overline{q} / Q$  \Comment{Sample next character proportional to $\overline{q}$}

\State $\incl(\prefix \, a) \gets \incl(\prefix) \cdot \overline{q}(a) / Q$

\State $\prefix \gets \prefix \, a$     \Comment{extend the prefix (i.e., transition to the next node)}

\State $c \gets c \cdot p_2(a)$

\EndWhile

\State $\fastTotalWeight{} \gets \sum_{\fastSample' \in \fastS} \fastWeight(\fastSample')$

\State $\fastSample \sim \fastWeight(\cdot) / \fastTotalWeight{} $

\State \Return $(\fastSample, \fastTotalWeight{})$ %
\EndProcedure
\end{algorithmic}
\caption{Character proposal: This procedure implements a properly weighted proposal distribution for the unnormalized version of the locally constrained distribution $\lpoe{\{\potCFG\}}( \cdot \mid \xx)$.}
\label{alg:character-proposal}
\end{algorithm}

It is straightforward to verify that the character proposal is an instance of set-based proposal speedup (\cref{def:set-based-proposal-speedup}), which is properly weighted according to \cref{prop:set-speedup-properly-weighted}.

\clearpage
\section{Estimating Inference Quality}\label{sec:kl-estimators}
\newcommand{\qalg}{q^\text{r}_\text{alg}}
The KL divergence $\KL(q \midpipe \gpoe)$ measures how well the distribution $q$ approximates the global product of experts $\gpoe$.
The exact $\KL(q \midpipe \gpoe)$ is generally intractable, for two reasons: $\gpoe$ is known only up to a normalizing constant, and $q$ (which, for us, represents the marginal distribution of a particle at the end of inference) is not known in closed form. To address the first difficulty, we instead measure the \emph{approximation quality} $\log Z - \KL(q \midpipe \gpoe)$; because $Z$ is a function only of $\gpoe$ and not of the inference algorithm, as we vary the inference algorithm, the resulting variation in approximation quality is caused solely by changes to the KL term. To address the second difficulty, we work over extended state spaces to obtain lower bounds on the approximation quality~\citep{lew2022recursive,zhao2024probabilistic}. In \cref{sec:kl-estimators-1} we show how to estimate this quantity in a general setting, and in \cref{sec:kl-estimators-2} we explain how to adapt this result to the setting where $\gpoe$ incorporates hard constraints (making the KL divergence discussed above potentially infinite).

\begin{table}[t]
  \centering
  \rowcolors{2}{lightgray!20!white}{white}
  \resizebox{\textwidth}{!}{
  \begin{tabular}{lccc}
  \toprule
  \textbf{Name} & \textbf{Proposal} & \textbf{Target} & \textbf{Inference alg.} \\ %
  \midrule
  Language model & $\plm$ & $\plm$  & Exact \\
  \emph{w/ grammar constraint} &  $\lpoeFast$ & $\lpoeFast$ & Exact \\
  \emph{w/ grammar constraint, weight correction}  &  $\lpoeFast$ & $\gpoe_{\mathrm{eff}}$ & IS \\
  \emph{w/ grammar constraint, potential} & $\lpoeFast$ & $ \lpoeFast$ $\cdot$ $ \PotsSlow   $ & IS \\ 
  \emph{w/ grammar constraint, potential, and correction} & $ \lpoeFast$ & $\gpoe$ & IS \\ 
  \emph{w/ grammar constraint, potential, correction, and resampling} & $\lpoeFast$ & $\gpoe$ & SMC\\ 
  \bottomrule
  \end{tabular}
  }
  \caption{Potentials, models, and inference algorithms. Note that $\gpoe_{\mathrm{eff}}$ differs from $\gpoe$, as the former only incorporates the product of efficient potentials $\PotsFast$. Consistent with our approach throughout the paper, we also assume that the local product of experts $\lpoeFast$, only features the product with efficient potentials $\PotsFast$.}
  \label{table:proposal_target}
  \end{table}

\subsection{Importance Sampling and Sequential Monte Carlo}
\label{sec:kl-estimators-1}

\newcommand{\smallTuple}[1]{\langle #1 \rangle}

We begin by considering the case of a generic importance sampling algorithm, which draws $K$ particles $\xx^{(i)} \in \complete$ from a proposal distribution $q$ to approximate the target distribution $\sigma$. For each particle, the algorithm computes an importance weight relative to the \emph{unnormalized} target distribution $\tilde{\sigma}(\xx)=Z_\sigma\sigma(\xx)$, where $Z_{\sigma}>0$ is the unknown normalizing constant.  

\paragraph{Estimating the quality of 1-particle inference.} A very simple inference strategy is to use the proposal $q$ to generate a single sample $\xx$, without further correction. A reasonable measure of inference quality would be $\KL(q \midpipe \sigma)$. However, due to the unknown normalizing constant of $\tilde{\sigma}$, the quantity that we can more easily estimate unbiasedly is $\log Z_\sigma - \KL(q \midpipe \sigma) = \E_{\xx \sim q}[\log \tilde{\sigma}(\xx) - \log q(\xx)]$. As the expression suggests, we can estimate this quantity by sampling many sequences $\xx$ independently from $q$ and computing the average across samples of $\log \tilde{\sigma}(\xx) - \log q(\xx)$. Because $Z_\sigma$ depends on only  $\tilde{\sigma}$, and not the inference algorithm, we can compare proposals $q$ by estimating this quantity; higher is better, implying lower $\KL(q \midpipe \sigma)$. 

\paragraph{Estimating the quality of $K$-particle IS.} In $K$-particle IS, we consider the posterior approximation to be the distribution obtained by running importance sampling and resampling a particular particle $\xx^{(k)}$ with probabilities proportional to the normalized particle weights. Formally, let $\mathbf{S} = \complete \times \complete^{K-1}$ be the space of $K$-particle collections with a distinguished, chosen particle; we write elements of $\mathbf{S}$ as $\langle \xx^{(k)}, \xx_{-k}\rangle$, where $\xx_{-k}=\{ \xx^{(i)} \mid i = 1 \cdots K, i \neq k \}$ are the $K-1$ unchosen particles. The importance sampling procedure defines the following distribution over $\mathbf{S}$:
\begin{equation}
        q_\text{IS}(\smallTuple{\xx^{(k)}, \xx_{-k}}) \defeq \frac{\w(\xx^{(k)})}{\sum_{i=1}^K \w(\xx^{(i)})} {\prod_{i = 1}^K q(\xx^{(i)}) },
\end{equation}
where $\w(\xx^{(i)}) = \frac{\tilde{\sigma}(\xx^{(i)})}{q(\xx^{(i)})}$ is the importance weight for particle $\xx^{(i)}$. We are interested in the quality of the \textit{marginal posterior approximation} \begin{equation}
    q_{\text{IS}}^{*}(\xx) \defeq \!\!\!\!\!\! \sum_{\langle \xx^{(k)}, \xx_{-k}\rangle \in \mathbf{S}} \!\!\!\!\!\! q_{\text{IS}}(\langle \xx^{(k)}, \xx_{-k}\rangle) \cdot \mathbf{1}[\xx = \xx^{(k)}].
\end{equation} 
Although we can generate samples from $q_\text{IS}^*$ (by running importance sampling and selecting a chosen particle), we cannot evaluate its density exactly, so we cannot directly compute unbiased Monte Carlo estimates of $\log Z_\sigma - \KL(q_\text{IS}^* \midpipe \sigma)$ as before. Instead, we extend the state space of the target distribution $\sigma$ to obtain a new distribution $\sigma_\text{IS}$ over $\mathbf{S}$, following~\citet{Andrieu2009ThePA}:

\begin{align}
    \sigma_\text{IS}(\smallTuple{\xx^{(k)}, \xx_{-k}}) \defeq \frac{1}{K} \sigma(\xx_{-k} \mid \xx^{(k)})\sigma(\xx^{(k)}), \text{ where } \sigma({\xx_{-k} \mid \xx^{(k)}}) \defeq \prod_{i \neq k} q(\xx^{(i)})
\end{align}
We write $\tilde{\sigma}_{\text{IS}} = Z_\sigma \tilde{\sigma}_\text{IS}$ for the unnormalized extended target.

Note that the marginal distribution of the extended target, \begin{equation}\sigma_\text{IS}^*(\xx) = \!\!\!\!\!\!\sum_{\langle \xx^{(k)},\xx_{-k}\rangle \in \mathbf{S}}\!\!\!\!\!\! \sigma_\text{IS}(\langle\xx^{(k)}, \xx_{-k}\rangle) \mathbf{1}[\xx = \xx^{(k)}]
\end{equation} is equal to $\sigma(\xx)$. The KL divergence between two joint distributions is lower-bounded by the KL divergence between their marginals, so we have $\KL(q_\text{IS}^* \midpipe \sigma) \leq \KL(q_{\text{IS}} \midpipe \sigma_{\text{IS}})$.

Both $q_\text{IS}$ and $\tilde{\sigma}_\text{IS}$ have tractable joint densities, so we can estimate $\log Z_\sigma - \KL(q_\text{IS} \midpipe \sigma_\text{IS})$ by repeatedly generating $\langle \xx^{(k)}, \xx_{-k}\rangle \sim q_\text{IS}$ and computing the log density ratio, which simplifies into a log average particle weight.

\begin{proposition}[Estimating $\log Z_{\sigma} - \KL(q_\text{IS}^* \midpipe \sigma)$]
     Consider $K$ particles $\xx^{(i)}$ generated from the proposal $q$ and let $\w(\xx^{(i)}) = \tilde{\sigma}(\xx^{(i)})/q(\xx^{(i)})$ be their importance weights. Then $\E\left[\log \frac{1}{K} \sum_{i=1}^K \w(\xx^{(i)})\right]$ is a lower bound on $\log Z_{\sigma} - \KL(q_\text{IS} \midpipe \sigma)$.
\end{proposition}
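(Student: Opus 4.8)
The plan is to establish the displayed quantity as an \emph{exact} identity on the extended state space $\mathbf{S}$ and then combine it with the marginalization inequality recorded just above the proposition. First I would compute the density ratio of the unnormalized extended target to the extended proposal and show that it telescopes to the average particle weight. Using $\tilde\sigma_\text{IS}=Z_\sigma\,\sigma_\text{IS}$ together with the definitions of $\sigma_\text{IS}$ and $q_\text{IS}$,
\[
\frac{\tilde\sigma_\text{IS}(\smallTuple{\xx^{(k)},\xx_{-k}})}{q_\text{IS}(\smallTuple{\xx^{(k)},\xx_{-k}})}
=\frac{\tfrac1K\,\tilde\sigma(\xx^{(k)})\prod_{i\neq k}q(\xx^{(i)})}{\dfrac{\w(\xx^{(k)})}{\sum_{j}\w(\xx^{(j)})}\prod_{i}q(\xx^{(i)})}
=\frac{\tfrac1K\,\tilde\sigma(\xx^{(k)})\,\sum_{j}\w(\xx^{(j)})}{\w(\xx^{(k)})\,q(\xx^{(k)})}
=\frac1K\sum_{i=1}^{K}\w(\xx^{(i)}),
\]
where the final equality uses $\w(\xx^{(k)})\,q(\xx^{(k)})=\tilde\sigma(\xx^{(k)})$. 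Crucially, the right-hand side does not depend on the distinguished index $k$.

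Second, I would take logarithms and expectations under $q_\text{IS}$. Since $\tilde\sigma_\text{IS}=Z_\sigma\,\sigma_\text{IS}$, the definition of KL divergence gives
\[
\E_{\smallTuple{\xx^{(k)},\xx_{-k}}\sim q_\text{IS}}\!\left[\log\frac{\tilde\sigma_\text{IS}}{q_\text{IS}}\right]=\log Z_\sigma-\KL(q_\text{IS}\midpipe\sigma_\text{IS}).
\]
Because the integrand equals $\log\bigl(\tfrac1K\sum_i\w(\xx^{(i)})\bigr)$, a function of the particle tuple alone, and because the $q_\text{IS}$-marginal of $(\xx^{(1)},\dots,\xx^{(K)})$ is exactly $\prod_i q(\xx^{(i)})$ (the resampling weights over $k$ sum to one), this expectation coincides with $\E\bigl[\log\tfrac1K\sum_i\w(\xx^{(i)})\bigr]$ computed under $\xx^{(1)},\dots,\xx^{(K)}\simIID q$, which is precisely the estimator in the statement. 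Hence its mean equals $\log Z_\sigma-\KL(q_\text{IS}\midpipe\sigma_\text{IS})$ exactly.

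Third, I would chain this with the marginalization inequality already stated: $\sigma_\text{IS}$ marginalizes over the chosen particle to $\sigma$ and $q_\text{IS}$ marginalizes to $q_\text{IS}^*$, and KL between joint distributions dominates KL between their marginals, so $\KL(q_\text{IS}^*\midpipe\sigma)\le\KL(q_\text{IS}\midpipe\sigma_\text{IS})$. Therefore
\[
\E\!\left[\log\frac1K\sum_{i=1}^{K}\w(\xx^{(i)})\right]=\log Z_\sigma-\KL(q_\text{IS}\midpipe\sigma_\text{IS})\le\log Z_\sigma-\KL(q_\text{IS}^*\midpipe\sigma),
\]
which is the asserted lower bound; specializing to $K=1$ gives $q_\text{IS}^*=q$ and recovers the $1$-particle identity.

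The argument is essentially bookkeeping. The only steps needing care are (i) checking that the $\tfrac1K$ prefactor and the product of proposal densities over the un-chosen particles in $\sigma_\text{IS}$ are exactly what make the density ratio collapse to the mean weight, and (ii) confirming that marginalizing $q_\text{IS}$ over the distinguished index genuinely returns the i.i.d.\ product $\prod_i q(\xx^{(i)})$, so that the extended-space expectation agrees with the estimator as phrased in terms of i.i.d.\ draws from $q$. Neither is deep, so I anticipate no real obstacle beyond careful accounting.
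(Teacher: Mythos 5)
Your proof is correct and follows essentially the same route as the paper's: the same density-ratio computation collapsing $\tilde\sigma_\text{IS}/q_\text{IS}$ to the average weight $\frac{1}{K}\sum_i \w(\xx^{(i)})$, combined with the same marginalization inequality $\KL(q_\text{IS}^*\midpipe\sigma)\le\KL(q_\text{IS}\midpipe\sigma_\text{IS})$. The one thing you add beyond the paper's argument is the explicit check that the $q_\text{IS}$-marginal over the particle tuple is the i.i.d.\ product $\prod_i q(\xx^{(i)})$, so the extended-space expectation matches the estimator as stated in terms of i.i.d.\ draws from $q$ --- a detail the paper leaves implicit but which is worth making precise.
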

\begin{proof}
\begin{subequations}
\begin{align}
\log Z_\sigma - \KL(q_\text{IS}^* \midpipe \sigma) &\geq \log Z_\sigma - \KL(q_\text{IS} \midpipe \sigma_\text{IS})\\
&= \E_{q_\text{IS}}\left[\log\frac{\tilde{\sigma}_\text{IS}(\Tuple{\xx^{(k)}, \xx_{-k}})}{q_\text{IS}(\Tuple{\xx^{(k)}, \xx_{-k}})}\right]\\
&= \E_{q_\text{IS}}\left[\log \frac{\frac{1}{K}\tilde{\sigma}(\xx^{(k)})\prod_{i \neq k} q(\xx^i)}{ q(\xx^{(k)}) \frac{\w(\xx^{(k)})}{\sum_{i=1}^K \w(\xx^{(i)})} \prod_{i \neq k} q(\xx^{(i)}) }\right]\\
&= \E_{q_\text{IS}}\left[\log \frac{\frac{1}{K}\tilde{\sigma}(\xx^{(k)})\prod_{i \neq k} q(\xx^i)}{ q(\xx^{(k)}) \frac{\tilde{\sigma}(\xx^{(k)})/q(\xx^{(k)})}{\sum_{i=1}^K \w(\xx^{(i)})} \prod_{i \neq k} q(\xx^{(i)}) }\right]\\
&= \E_{q_\text{IS}}\left[\log \frac{\frac{1}{K}}{\frac{1}{\sum_{i=1}^K \w(\xx^{(i)})}}\right]\\
&= \E_{q_\text{IS}}\left[\log \frac{1}{K}\sum_{i=1}^K \w(\xx^{(i)})\right]
    \end{align}
\end{subequations}
\end{proof}

Hence, $\log \mleft( \frac{1}{K} \sum_{i=1}^K \w(\xx^{(i)}) \mright)$ can be seen as a single sample estimate of $\log Z_{\sigma} - \KL(q_\text{IS}^* \midpipe \sigma)$, with negative bias. The precise bias can be shown to be $-\KL(q_\text{IS}(\Tuple{\xx^{(k)},\xx_{-k}} \mid \xx^{(k)}) \midpipe \sigma_\text{IS}(\Tuple{ \xx^{(k)},\xx_{-k}} \mid \xx^{(k)}))$, which decreases as the number of particles increase~\citep{lew2022recursive}. As before, the term $\log Z_\sigma$ is a constant that is independent of the inference algorithm, so this estimate can be used as a measure of inference quality across algorithms, where higher is better.

\paragraph{Estimating the quality of SMC.} The same logic as above can be applied to SMC~\citep{Andrieu2009ThePA,lew2022recursive,zhao2024probabilistic}. As in the IS case, we use an extended target $\sigma_{\text{SMC}}$ such that the density ratio works out to exactly the average particle weight at the end of the algorithm. 

\subsection{Estimating Inference Quality for Rejection-Sampled Variants}
\label{sec:kl-estimators-2}

 When attempting to estimate the discrepancy between samples for algorithms $q_\text{alg}$ and $\gpoe$, one difficulty is that $\KL(q_\text{alg}\midpipe \gpoe)$ can be infinite when $\gpoe$ incorporates hard constraints. This is because, in those cases, there is positive probability on the outcome that all generated proposals fail to meet the constraints, and thus have mass 0 under $\gpoe$. A potential solution to this issue, explored by \cite{zhao2024probabilistic}, is to instead estimate $\KL(\gpoe||q_{\text{alg}})$. But this requires exact samples from $\gpoe$, which are impractical to obtain in our setting. We thus take a different approach and consider rejection-sampled versions of each of our algorithms, $\qalg$, which draw samples $\xx \sim \qalg$ repeatedly until $q_\text{alg} > 0$. In this case, we have
\begin{subequations}
\begin{align}
\KL(q^{r}_\text{alg}||\gpoe) &= \mathbb{E}_{\qalg}\left[ \log \frac{\qalg(\xx)}{\gpoe(\xx)} \right]\\
&= \mathbb{E}_{\qalg}\left[ \log\frac{q_{\text{alg}}(\xx)}{\gpoe(\xx)Z^r_{alg}} \right]\\
&= \mathbb{E}_{\qalg}\left[ \log\frac{q_{\text{alg}}(\xx)}{\gpoe(\xx)} \right] - \log Z^r_{alg}\label{eq:rejection-sampled-est}
\end{align}
\end{subequations}
\noindent where $Z^r_{alg}$ is the acceptance rate of $\qalg$ (which we can estimate with standard Monte Carlo), and we can estimate the first term in \cref{eq:rejection-sampled-est} up to an instance-specific constant by the derivations above.

\clearpage
\section{Domain Details}\label{app:domains}
This section provides further details on the domains used in the experiments.

\subsection{Text-to-SQL (Spider)}

Spider is a large-scale text-to-SQL dataset of natural language questions and database schemas. Given a natural language question and schema, the task is to generate a valid SQL query that is semantically equivalent to a ground-truth query. In this domain, $\PotsFast$ is used to enforce valid SQL syntax according to the SQL grammars released by \citet{roy2024benchclamp}. These grammars include schema-specific constraints that limit table and column names to those present in the given schema but do not ensure correct table-column associations. Thus, we use $\PotsSlow$ to verify whether the (partial) SQL query references a column that exists in a table, returning $0$ in the case that it does not and $1$ otherwise. \cref{tab:sql-pots} provides an example of $\PotsFast$ and $\PotsSlow$ applied to an SQL query. Since $\PotsSlow$ is only semantically meaningful when a generated query has fully specified the necessary table or alias information to check correspondences, we only run the table-column verification at clause boundaries once the \texttt{FROM} clause has been completed. We evaluate on the development split of Spider with execution accuracy, which checks whether the predicted SQL query's output matches that of the ground-truth query. We define $\plm(\xx)$ by prompting Llama-3.1 (8b) instruct with 3 examples followed by a rendering of the database and the natural language question.

\begin{table}[t]
    \centering
    \begin{tabular}{cc}
        \begin{subtable}[t]{\textwidth}
            \centering
            \begin{tabular}{ll}
                \toprule
                \textbf{Table} & \textbf{Columns} \\ 
                \midrule
                \texttt{singer} & \texttt{singer\_id}, \texttt{name}, $\dots$ \\ 
                \texttt{concert} & \texttt{concert\_id}, \texttt{concert\_name}, $\dots$\\ 
                \bottomrule
            \end{tabular}
            \caption{Example schema}
        \end{subtable} \\ 
        \hspace{10mm}\\
        \begin{subtable}[t]{\textwidth}
            \centering
            \begin{tabular}{lccc}
                \toprule
                  \textbf{Query}   & $\PotsFast$ & $\PotsSlow$ & \textbf{Description} \\
                \midrule
                    \small \texttt{SELECT song\_id FROM singer} $\dots$ & \xmark & \xmark & \small Invalid column name\\
                    \small \texttt{SELECT singer\_id FROM concert} $\dots$ & \cmark & \xmark & \small Invalid column name for table\\
                    \small \texttt{SELECT singer\_id FROM singer} $\dots$ & \cmark & \cmark & \small Valid column name for table\\
                \bottomrule
            \end{tabular}
            
            \caption{Example queries and potential values}
        \end{subtable} 
    \end{tabular}
    \vspace{2mm}
    \caption{Overview of potentials used in Spider experiments for a given schema. We condition the base language model using two potentials. $\PotsFast$ ensures syntactically valid SQL queries that only include table and column names present in the schema. $\PotsSlow$ further restricts SQL queries by ensuring a correct correspondence between column and table names.}
    \label{tab:sql-pots}
\end{table}

\subsection{Molecular Synthesis (GBD-17)}
\label{app:gbd_17}

A recent line of work has applied LMs to the problem of molecular synthesis, with the aim of generating candidate molecules with properties similar to molecules from known databases \citep[see][for review]{oliveira2022machine}---most commonly (e.g. \citet{flam2022language, wang2024grammar}) by prompting with examples of molecules in SMILES format \citep{weininger1988smiles}. We follow this approach, constructing prompts from random subsets of 20 molecules from the GDB-17 dataset \citep{ruddigkeit2012enumeration}. We evaluate generations using the standard molecule fitness function Quantitative-Estimated Drug-likeness \citep[QED; ][]{bickerton2012quantifying} implemented in the Python RDKit library \citep{rdkit}. This metric combines eight physicochemical properties of a compound: Molecular weight, LogP, H-bond donors, H-bond acceptors, Charge, Aromaticity, Stereochemistry, and Solubility. Here, $\PotsFast$ enforces SMILES syntax. To enforce properties not encoded by this syntax, we define \PotsSlow using a molecule validator that can be applied to partial SMILES strings, implemented in the Python \emph{partialsmiles} library \citep{partialsmiles}. The validator checks the SMILES prefix to ensure that the atom's valences are in a list of allowed valences, and attempts to find alternating patterns of single and double bonds to cover all aromatic systems in the partial string. The additional metrics reported in \cref{fig:qed_radar} are \emph{Validity} (proportion of valid SMILES), \emph{Weight} (exact molecular weight), \emph{De Novo Similarity} (average pairwise Tanimoto similarity to the target distribution, excluding exact duplicates), and \emph{Diversity} (inverse average pairwise Tanimoto similarity among compounds generated by a particular method).

\subsection{Goal Inference (Planetarium)}\label{app:planetarium}

Recent work has explored using LMs for planning with languages like the Planning Domain Definition Language (PDDL) \citep{aeronautiques1998pddl}, by either generating plans directly \citep{silver2022pddl,wong2023learning,yingneuro,zhang2024proc2pddl,zhi2024pragmatic}, or generating descriptions of a task's initial and/or goal conditions, which classical planning algorithms can use to search for plans \citep{liu2023llm+,xie2023translating,guan2023leveraging}. In the spirit of the latter, we use the Blocksworld tasks from the Planetarium benchmark \citep{zuo2024planetarium}, which provides natural language descriptions of a task's initial and goal conditions along with their ground-truth symbolic representations in the STRIPS subset of PDDL \citep{fikes1971strips}. The original dataset is extremely challenging, requiring the LM to output a full STRIPS description of tasks with up to 100 objects---\citet{zuo2024planetarium} report fewer than 2\% of the outputs of Gemma 1.1 7B to be even parseable. We therefore simplify the task by limiting our evaluation to examples with fewer than 10 objects, and requiring the LM to generate only the goal conditions by appending the ground-truth initial conditions to the pre-prompt. Here, $\PotsFast$ encodes STRIPS syntax for goals within Planetarium's Blocksworld domain definition; $\PotsSlow$ uses a gold-standard plan known to satisfy the ground-truth task description, and calls the VAL plan validator \citep{howey2004val} to test whether partial goal descriptions are valid according to that plan. The gold-standard plans for each instance were derived using the fast downward algorithm \citep{helmert2006fast}. Note that it is only possible to apply this $\PotsSlow$ potential to partial strings if goal descriptions are \emph{monotonic}, that is, if any goal prefix describes a superset of the states that the full goal describes. This is the case for STRIPS, where goals must be described as conjunctions of literals, so that we can evaluate the potential after each literal in the conjunction is completed.\looseness=-1

\subsection{Data Science (DS1000)}
DS-1000 \citep{lai2023ds} is a challenging code generation benchmark on data science problems in Python, split into subsets requiring the use of six popular libraries: Pandas, NumPy, Scikit-Learn, SciPy, TensorFlow, and PyTorch.  For each problem instance, the language model is prompted with an English description of the problem and a sample test case in Python and is tasked with generating code that solves the problem and passes the test case.  Each test case includes a result variable, and success depends on the execution. In preliminary experiments, we observed that our language model was able to generate syntactically correct Python programs for every sample. We, therefore, set $\PotsFast=1$ for our experiments in this domain. Thus, unlike the other three domains, the proposal distribution for all evaluations of DS1000 was simply $\plm$. In this domain, $\PotsSlow$ simply executes the test cases provided in the prompts from \citet[][]{lai2023ds}  on generated (partial) Python programs, and returns $1$ if no errors are produced and $0$ otherwise (in particular, we did not make use of the output of test cases). Note that it is only possible to execute Python code when the generated sequence $\xx$ consists entirely of well-formed Python statements, thus in this domain $\PotsFast$ can only be meaningfully applied at the boundary of statements---this motivates aligning SMC particles using statements as their steps, as explained in the ``Further extensions" section in (\cref{sec:global-local}).

\clearpage
\section{Related Work}
\label{app:relatedwork}

Our contributions are primarily situated among several bodies of work.

First, there is a large body of work leveraging LMs for semantic parsing or code generation tasks, while forcing adherence to a grammar or other constraints \citep{shin2021constrained,scholak2021picard,poesia2022synchromesh,shin2022few,geng2023grammar,zheng2023sglang,moskal2024aici,wang2024grammar,ugare2024improving}.
Closely related is a series of algorithmic advances that enable the efficient construction and application of grammar constraints for sequential inference problems via compilation to automata \citep{deutsch2019general,willard2023efficient,kuchnik2023validating,koo2024automata}.
This space is further discussed in \cref{app:relatedwork:grammar}.

Second, there is a large body of work that aims to generate from LMs subject to hard or soft constraints. 
Many such strategies are based in reinforcement learning (RL) \citep{ziegler2019fine,stiennon2020learning,bai2022constitutional,ouyang2022training}, classifier-guided control \citep{cheng2024linearly}, efficient probabilistic inference through tractable proxy models \citep{zhang2023tractable}, and locally applied \emph{logit biasing} or \emph{masking} based on domain-specific potential functions \citep{pascual2021plug,huang2024grounded}. 
This is further discussed in \cref{app:relatedwork:constraints}.
In addition, there is a growing set of approaches that cast constrained sequential generation as probabilistic conditioning, leveraging the toolkit of probabilistic inference to derive principled generation strategies \citep{miao-etal-2020-right,yang-klein-2021-fudge,lew2023sequential,zhao2024probabilistic,park2024grammar,puri2025probabilistic}.
These works themselves are motivated by a rich history of conditioning autoregressive models in NLP, often in combination with particle-based inference methods \citep{borschinger-johnson-2011-particle,dubbin.g:2012,yang-eisenstein-2013-log,buys-blunsom-2015-bayesian,lin-eisner-2018-naacl}. 
A discussion of the most relevant works is expanded in \cref{app:relatedwork:inference}.

In this work, our algorithms aim to unify and improve on each of these bodies of preceding work, tackling semantic parsing and code generation tasks with a combination of grammar constraints (\PotsFast), expensive potentials (\PotsSlow), and asymptotically correct inference (via SMC).

\subsection{Grammar-Constrained Semantic Parsing with LMs}
\label{app:relatedwork:grammar}

\citet{shin2021constrained} presented a system allowing LMs to be locally intersected with (boolean) CFGs to restrict generations to conform to target formal languages, and that with only a few in-context examples, such an inference-time strategy could outperform more substantial fine-tuning. 
Concurrently, PICARD \citep{scholak2021picard} presented an approach for intersecting LMs with an incremental parsing algorithm and showed how additional context-sensitive constraints could be imposed, such as requiring table-column matching for SQL generation via the use of programmable ``guards''.
Synchromesh \citep{poesia2022synchromesh} generalized these frameworks and extended the idea of incremental guards that can impose semantic restrictions during generation---such as typing and scoping rules---by dynamically constructing constraints as regular expressions on the fly.
A great deal of other work has explored variants of LM-grammar intersection including the effectiveness of pre-training models on code for these settings \citep{shin2022few}, the runtime compilation of individual task instances into highly specific, task-specialized grammars \citep{geng2023grammar}, and even using the LM to generate grammars directly at runtime, that then restrict their own generation to solve a task \citep{wang2024grammar}.
Other work has focused more closely on the standard syntactic-constraint problem but with an emphasis on optimizing efficient data structures and algorithms for fast LM-CFG intersection
\citep{ugare2024improving,zheng2023sglang,moskal2024aici}.

A parallel line of work in this space has been concerned with the efficient construction and application of constraints for sequential inference problems.
\citet{deutsch2019general} first noted that regular and context-free grammar constraints could be pre-compiled to automata---these could then be used during sequential inference to impose constraints with near-zero runtime overhead.
This approach was independently developed and efficiently implemented in the context of restricting LM generations to regular expressions by the Outlines  \citep{willard2023efficient} and ReLM libraries \citep{kuchnik2023validating}.
Similar work was later developed by \citet{koo2024automata}, who extended several formal automata-theoretic characteristics of these constructions.

This work has noted the complications of efficiently intersecting grammars whose atoms are terminals and LMs whose atoms are tokens, which we refer to as the \emph{token--terminal alignment problem}. 
An efficient and accurate solution to this problem space was one of several desiderata for our proposal algorithm (see \cref{alg:character-proposal} in \cref{sec:char-proposal} for more details).
These works have also discussed considerations that arise in the construction of automata, whose arcs are tokens, in the assignment of probabilities to strings.
Namely, there are exponentially many latent token trajectories that correspond to a generated sequence.
While the correct method for assigning string probabilities involves marginalizing over these trajectories \citep{cao2021you}, in practice, simply using the \emph{canonical tokenization} accounts for the overwhelming majority of the probability mass and can be justified \citep{chirkova2023should,kuchnik2023validating,berglund2024constructing,vieira2024language}. 
In the present work, we do not enforce this assumption and allow all token trajectories.

\subsection{Conditional Generation Subject to Constraints}
\label{app:relatedwork:constraints}

Language models pre-trained on a next-word objective reflect the distribution of their pre-training corpora, but often the inference-time needs of tasks necessitate that LMs modify this base distribution.

One approach to this class of problems is fine-tuning or reinforcement learning via some set of data that more closely mirrors the target task, such as via reinforcement learning from human feedback (RLHF) \citep{ziegler2019fine,stiennon2020learning,bai2022constitutional,ouyang2022training}, but this method comes with challenges such as hyperparameter sensitivity and distributional collapse \citep{zheng2023secrets,zhu2023principled,xiong2024iterative}.
Some of these drawbacks can be mitigated by utilizing on-policy data \citep{tajwar2024preference} and imposing a KL penalty that penalizes shifting an LM too far from its prior distribution, casting optimization as a variational inference problem \citep{korbak2022rl,amini2024variational}.

Another inference-time approach to controlled generation for an LM is via direct modification to the LM's sampling distribution. 
This may be done via controlling intermediate layer activations with classifier guidance \citep{cheng2024linearly}, guiding autoregressive generation with a proxy probabilistic model for which estimation of the conditional density is tractable \citep{zhang2023tractable}, or most commonly by directly intervening on the final logits before sampling to impose intersection with a potential function. 
\citet{pascual2021plug} presented an early variant of such \emph{logit-biasing} to encourage the presence of predefined guide words in generations.

This pattern is employed more broadly for hard constraints via \emph{logit-masking}, setting the probability associated with particular tokens to zero, forcing the LM to sample from a subset of its distribution over sequences.
This approach is used in most of the grammar-constrained semantic parsing work outlined in the previous section.
Most recently, there have been attempts to restrict and re-weight generations not only via grammars but through additional expensive potentials such as grounded affordances in robotics settings \citep{ahn2022can,huang2024grounded}.
However, in all of these works, constraints are imposed greedily, resulting in a local product of experts construction, and care is not taken to appropriately target the implied global product of experts.
It should then come as no surprise that while standard approaches to grammar-constrained generation have been successful, they have been far from a silver bullet \citep{tam2024let}.

\subsection{Approximate Posterior Inference in Large Language Models via Sampling}
\label{app:relatedwork:inference}

This leads to a third line of work that formulates constrained generation from language models as posterior inference~\citep{zhang2023tractable}, and employs approximate inference to sample from the desired target distribution. This is in contrast to yet another line of work that views constrained decoding as an \textit{optimization} problem, and tackles it via search \citep{meister2020if,lu2021neurologic,zhangplanning} or continuous optimization \citep{dathathri2019plug,kumar2021controlled}.

Several approximate inference algorithms have been explored for generating constrained samples from LMs, including rejection sampling \citep{poesia2022synchromesh}, as well as MCMC~\citep{miao2019cgmh,hie2022high,zhang2020language,qin2022cold,kumar2022constrained,du-et-al-2024-mcmc}. A weakness of MCMC-based approaches is that they do not fully exploit the autoregressive factorization of modern language models; each edit to a candidate sequence requires re-evaluating the entire sequence (or at least the entire suffix) to compute a new target density.

\citet{lew2023sequential} propose SMC steering of LMs via probabilistic programming specifications. 
This work enables provably accurate posterior sampling from such conditional targets, globally steering generation while only ever computing local constraints. Our approach builds on their work.

Shortly thereafter, \citet{zhao2024probabilistic} independently developed a framework for expressing various LM tasks as probabilistic inference problems that can be tackled with SMC.
Similar to our work, \citet{zhao2024probabilistic} guide SMC with intermediate targets---in their case, learned twist functions via a novel contrastive method---that enable estimation of the expected future value of each candidate partial sequence.
Their work also developed methods for evaluating LM inference algorithms via bi-directional bounds on the log-partition function that can be used to estimate the KL divergence between the inference and target distribution.
In contrast to this prior work, our approach to SMC leverages incremental static and dynamic analyses to inform our proposal distributions and twist functions, as opposed to learning components of these algorithms via a costly contrastive fine-tuning procedure.
In addition, our results directly relate the quality of our posterior approximation to improved performance on a series of standard, difficult benchmark tasks.

Concurrent with our work, \citet{park2024grammar} have highlighted the distinction between the prevalent locally constrained decoding approach and the more accurate targeting of the global distribution that arises from combining language models with constraints. \citet{park2024grammar}'s approach to approximate the global distribution is based on the concept of \emph{expected future grammaticality}, which is the probability that the completion to be sampled from the LM will be compliant with the given grammar.
The authors describe an iterative algorithm that approximates the global distribution by refining the estimates of the expected future grammatically. However, the proposed strategy shows relatively slow convergence, was specifically designed for a CFG constraint, and may not be easily adaptable to constraining with multiple potential functions.

\end{document}